\theoremstyle{definition}
\newtheorem{definition}{Definition}
\theoremstyle{plain}
\newtheorem{theorem}{Theorem}
\newtheorem{proposition}[theorem]{Proposition}
\def\R{\mathbb{R}}
\def\E{\mathbb{E}}
\def\w{\mathbf{w}}
\def\paragraph{\@startsection{paragraph}{4}{\z@}{.1ex plus 0ex minus .1ex}{-1em}{\normalsize\bf}}
\def\synthdistill{\textsc{Distill}}  %
\def\synthcat{\textsc{Cat}}    %
\def\plotlabel#1{\textsf{\small``#1''}}
\def\eg.{\mbox{e.}\mbox{g.}}
\def\ie.{\mbox{i.}\mbox{e.}}
\icmltitlerunning{Learning useful representations for shifting tasks and distributions}
\begin{document}
\twocolumn[
    \icmltitle{Learning useful representations for shifting tasks and distributions}

\begin{icmlauthorlist}
\icmlauthor{Jianyu Zhang}{nyu}
\icmlauthor{L\'eon Bottou}{fbny,nyu}
\end{icmlauthorlist}

\icmlaffiliation{nyu}{New York University, New York, NY, USA.}
\icmlaffiliation{fbny}{Facebook AI Research, New York, NY, USA.}

\icmlcorrespondingauthor{Jianyu Zhang}{jianyu@nyu.edu}
    \icmlkeywords{Deep Learning, ICML}

    \vskip 0.3in
]

\printAffiliationsAndNotice{}  %

\begin{abstract}

Does the dominant approach to learn representations (as a side effect of optimizing an expected cost for a single training distribution) remain a good approach when we are dealing with multiple distributions? Our thesis is that \emph{such scenarios are better served by representations that are richer than those obtained with a single optimization episode.} 
We support this thesis with simple theoretical arguments and with experiments utilizing an apparently na\"{\i}ve ensembling technique: concatenating the representations obtained from multiple training episodes using the same data, model, algorithm, and hyper-parameters, but different random seeds. These independently trained networks perform similarly. Yet, in a number of scenarios involving new distributions, the concatenated representation performs substantially better than an equivalently sized network trained with a single training run. 
This proves that the representations constructed by multiple training episodes are in fact different. Although their concatenation carries little additional information about the training task under the training distribution, it becomes substantially more informative when tasks or distributions change. Meanwhile, a single training episode is unlikely to yield such a redundant representation because the optimization process has no reason to accumulate features that do not incrementally improve the training performance.

\end{abstract}

\section{Introduction}

Although the importance of features in machine learning systems was already clear when the Perceptron was invented \citep{rosenblatt-1957}, learning features from examples was often considered a hopeless task \citep{minsky-papert-1969}. Some researchers hoped that random features were good enough, as illustrated by the Perceptron. Other researchers preferred to manually design features using substantive knowledge of the problem \citep{simon-1989}.  This changed when \citet{rumelhart-1986} showed the possibility of feature learning as a side effect of the risk optimization. Despite reasonable concerns about the optimization of nonconvex cost functions, feature discovery through optimization has driven the success of deep learning methods.

There are however many cues suggesting that learning features no longer can be solely understood through the optimization of the expected error for a single data distribution. First, adversarial examples \citep{szegedy-2014} and shortcut learning \citep{geirhos-2020} illustrate the need to make learning systems that are robust to certain changes of the underlying data distribution and therefore involve multiple expected errors. Second, the practice of transferring features across related tasks \citep{tr-bottou-2011,collobert-2011,oquab-2014} is now viewed as foundational \citep{bommasani-2021} and intrinsically involves multiple data distributions and cost functions. It is therefore timely to question whether the optimization of a single cost function creates and accumulates features in ways that make the most sense in this broader context.

This contribution reports on experiments showing how the out-of-distribution performance of a deep learning model benefits from internal representations that are richer and more diverse than those computed with a single optimization episode. More precisely, \emph{although optimization can produce diverse features, a single run is unable to collect them all into a rich representation that performs better when tasks or distributions change}. In a time where many organizations deploy considerable resources training huge foundational models, this observation should be sobering.

\paragraph{Organization of the manuscript}
 
Section~\ref{sec:features} provides simple theoretical tools to discuss the value of features, discusses their consequences in-distribution and out-of-distribution, and approaches the notion of feature redundancy and feature richness. Sections~\ref{sec:supervisedtransfer}, \ref{sec:ssltransfer}, \ref{sec:metalearning}, and \ref{sec:oodlearning}, present experimental results pertaining respectively to supervised transfer learning, self-supervised transfer learning, meta-learning, and out-of-distribution learning.

\section{Related work}

\paragraph{Representations and optimization}

\citet{papyan-2020} shows how deep network representations collapse to a ``simplex equiangular tight frame'' when one trains for a very long time. \citet{shwartz2017opening} argue that the training process first develops representations in unsupervised mode, then prunes away the unnecessary features. Both papers associate this representation impoverishment with better generalization (in-distribution). We argue that it hurts performance when distributions change. Closer to our concerns, \citet{pezeshki2021gradient} describe the gradient starvation phenomenon which makes it hard to find the right features when a network already has spurious features. They do not however consider how to produce rich representations with multiple training episodes.

\paragraph{Ensembles}

\cite{dietterich2000ensemble} argues that model diversity is essential for the generalization performance of ensembles. \citet{ganaie2021ensemble} reviews deep ensembles with the same conclusion.  Our work focuses instead on scenarios involving multiple tasks or data distributions. We purposely refrain from engineering diversity, still observe substantial improvements, and draw conclusions about the undesirable properties of optimization.

\paragraph{Universal representations}

Several authors \citep{wang2022cross,dvornik2020selecting,bilen2017universal,gontijo2021no,li2021universal,li2022universal, chowdhury2021few} have recently proposed to collect features obtained with different tasks, datasets, network architectures, or hyper-parameters. The resulting so-``universal'' representations can be helpful for a variety of tasks. This approach is certainly interesting for practical problems but would not have allowed us to draw our conclusions.

\paragraph{Model soups}

Another line of work uses weight averaging to combine the properties 
of diverse networks~\citep{wortsman2022model,rame2022diverse}, with an increasing focus on leveraging models trained on multiple tasks to achieve a high performance on a task of interest~\citep{ilharco2022editing,rame2022recycling}. This engineered diversity provides for high performance, but does not allow the authors to draw conclusions about the optimization process itself. 

\paragraph{Shortcut learning and mitigation}

Several authors \citep[e.g.][]{huang2020self, teney2022evading} propose to work around the shortcut learning problem \citep{geirhos-2020} by shaping the last-layer classifier or introducing penalty terms in a manner that favors richer representations. \citet{zhang2022rich} argue that such additions make the optimization very challenging, but can be managed by initializing the networks with a rich representation constructed with an elaborate multi-step process. We show that rich representations can also be built by merely training the same network multiple times and combining their representations.

\section{Features and representations}
\label{sec:features}

This section provides a conceptual framework for talking about richness and diversity of representations. Although it seems natural to compare representations using information theory concepts such as mutual information, this approach is fraught with problems. For instance, the simplest way to maximize the mutual information $M(\Phi(x),y)$ between the representation $\Phi(x)$ and the desired output $y$ consists of making $\Phi$ equal to the identity. The information theoretic approach overlooks the main role of a feature extraction function, which is not filtering the information present in the inputs $x$, but formatting it in a manner exploitable by a simple learning system such as a linear classifier or a linear regression.\footnote{We choose linear classifiers as the ``simple learning system'' in our framework for the ease of theoretical analysis. This does not imply non-linear classifiers would behave differently. In fact, we empirically investigate another simple learning system, a cosine classifier, in the appendix Table \ref{tab:few_shot_synt_cat}.} The following framework relies on the linear probing error instead.

\paragraph{Framework}

We call \emph{feature} a function $x\,{\mapsto}\,\varphi(x)\,{\in}\,\R$, and we call \emph{representation} a set $\Phi$ of features. We use the notation $\w^\top\Phi(x)$ to denote the dot product $\sum_{\varphi\in\Phi} w_\varphi\,\varphi(x)$ where the coefficients $w_\varphi$ of vector $\w$ are indexed by the corresponding feature $\varphi$ and are assumed zero if $\varphi\notin\Phi$.

We assume for simplicity that our representations are exploited with a linear classifier trained with a convex loss~$\ell$. The expected loss of classifier $f$ is
\[ C_P(f) = \E_{(x,y)\sim P}\:\big[\,\ell(f(x),y)\,\big]\]
and the optimal cost achievable with representation $\Phi$ 
\begin{equation}
    \label{eq:ecost}
    C^*_P(\Phi) = \min_\w ~ C_P(f) ~~\text{with}~~ f: x\mapsto \w^\top \Phi(x)~.
\end{equation}
This construction ensures:
\begin{proposition}
\label{prop:min}
$C^*_P(\Phi_1 \cup \Phi_2) \leq C^*_P(\Phi_2)$ for all $\Phi_1$, $\Phi_2$.
\end{proposition}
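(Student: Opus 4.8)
The plan is to argue that enriching a representation can only enlarge the class of linear classifiers available, so the minimization in the definition \eqref{eq:ecost} ranges over a superset and therefore cannot increase. The key ingredient is the bookkeeping convention stated just above the proposition: a weight vector $\w$ has its coordinates indexed by features, with $w_\varphi$ taken to be zero whenever $\varphi\notin\Phi$. This is exactly what makes it possible to compare classifiers built on different representations on equal footing.

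First I would fix $P$ and take an arbitrary weight vector $\w$ achieving (or approaching) the optimal cost $C^*_P(\Phi_2)$, i.e.\ one whose support is contained in $\Phi_2$. Since $\Phi_2 \subseteq \Phi_1\cup\Phi_2$, the very same $\w$ is also an admissible weight vector for the representation $\Phi_1\cup\Phi_2$: the additional coordinates, those indexed by features in $\Phi_1\setminus\Phi_2$, are simply left equal to zero, which by the stated convention leaves the function $x\mapsto\w^\top\Phi(x)$ literally unchanged. Hence every classifier expressible with $\Phi_2$ is also expressible with $\Phi_1\cup\Phi_2$, so the set of functions over which $C^*_P(\Phi_1\cup\Phi_2)$ is minimized contains the set over which $C^*_P(\Phi_2)$ is minimized.

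I would then conclude by monotonicity of the infimum under enlarging the feasible set: minimizing $C_P(f)$ over a larger collection of functions can only decrease (or keep equal) the optimal value, which gives $C^*_P(\Phi_1\cup\Phi_2)\le C^*_P(\Phi_2)$.

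I do not expect a real obstacle here; the statement is essentially a restatement of ``minimizing over a larger feasible set yields a smaller value.'' The only point that deserves a sentence of care is making the zero-padding convention explicit, so that a classifier trained on $\Phi_2$ is the \emph{same} function as the corresponding classifier on $\Phi_1\cup\Phi_2$ rather than merely a related one. Note that convexity of $\ell$ and attainment of the minimum in \eqref{eq:ecost} are not needed for the inequality itself, since it already holds at the level of infima; I would invoke them only if one wants the ``$\min$'' in \eqref{eq:ecost} to be literally achieved.
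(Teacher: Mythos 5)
Your proof is correct, and it is exactly the argument the paper leaves implicit (the paper simply says ``This construction ensures:'' before stating the proposition, with no written proof). Zero-padding makes any classifier over $\Phi_2$ admissible over $\Phi_1\cup\Phi_2$, so the minimum over the larger set cannot be larger; nothing more is needed.
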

Intuitively, if the combined representation $\Phi_1 \cup \Phi_2$ performs better than $\Phi_2$, then $\Phi_1$ must contain something useful that $\Phi_2$ does not. We formalize this using the word \emph{information} to actually mean \emph{linearly exploitable information about $y$}.

\begin{definition}
\label{def:more}
$\Phi_1$ contains information not present in $\Phi_2$\\
iff~
$C^*_P(\Phi_1\cup\Phi_2) < C^*_P(\Phi_2)$.
\end{definition}
Thanks to proposition~\ref{prop:min}, the opposite property becomes\,:
\begin{definition}
\label{def:notmore}
$\Phi_2$ contains all the information present in $\Phi_1$\\
iff~  $C^*_P(\Phi_1\cup\Phi_2) = C^*_P(\Phi_2)$.
\end{definition}
Finally we say that $\Phi_1$ and $\Phi_2$ carry equivalent information when $\Phi_2$ contains all the information present in $\Phi_1$, and $\Phi_1$ contains all the information present in $\Phi_2$\,:
\begin{definition}
\label{def:equiv}
$\Phi_1$ and $\Phi_2$ carry equivalent information\\ 
iff~ 
$C^*_P(\Phi_1) = C^*_P(\Phi_1 \cup \Phi_2) = C^*_P(\Phi_2)$.
\end{definition}
This definition is stronger\footnote{This is also weaker than using the quantity of information~$H$\,: writing $H(\Phi_1){=}H(\Phi_1{\cup}\Phi_2){=}H(\Phi_2)$ would imply that $\Phi_1$ and $\Phi_2$ are equal up to a bijection. Theorems~\ref{prop:ensemble} and~\ref{prop:opt} are important because this is not the case here.} than merely requiring equality $C^*_P(\Phi_1)\,{=}\,C^*_P(\Phi_2)$. In particular, we cannot improve the expected cost by constructing an ensemble\,:
\begin{theorem}
\label{prop:ensemble}
Let representations $\Phi_1$ and $\Phi_2$ carry equivalent information. Let $f_i(x){=}\w_i^{*\top}\Phi_i(x)$, for $i{\in}\{1,2\}$, be corresponding optimal classifiers. Then, for all $0{\leq}\lambda{\leq}1$\,,
\[  C^*_P(\lambda f_1 + (1-\lambda) f_2) = C^*_P(f_1) = C^*_P(f_2). \]
\end{theorem}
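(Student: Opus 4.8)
The plan is a two-sided squeeze on the expected loss of the mixed classifier $g := \lambda f_1 + (1-\lambda) f_2$: a lower bound from Proposition~\ref{prop:min} and an upper bound from convexity of $\ell$, with both bounds collapsing onto the common value $C^*_P(\Phi_1 \cup \Phi_2)$ thanks to the equivalent-information hypothesis. (I read the statement's $C^*_P(\lambda f_1 + (1-\lambda)f_2)$ as the expected loss $C_P(\lambda f_1 + (1-\lambda)f_2)$, and $C^*_P(f_i)$ as $C_P(f_i)$; since $f_i$ is an optimal classifier for $\Phi_i$, we have $C_P(f_i) = C^*_P(\Phi_i)$.)

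First I would observe that $g$ is itself a linear classifier over the representation $\Phi_1 \cup \Phi_2$: writing $f_i(x) = \w_i^{*\top}\Phi_i(x)$, we have $g(x) = \sum_{\varphi \in \Phi_1 \cup \Phi_2} (\lambda w_{1,\varphi}^* + (1-\lambda) w_{2,\varphi}^*)\,\varphi(x)$, using the paper's convention that a coefficient is zero whenever the feature is absent from the corresponding representation. Hence by the definition~\eqref{eq:ecost} of $C^*_P$ as a minimum over all linear classifiers on $\Phi_1 \cup \Phi_2$, we get $C_P(g) \geq C^*_P(\Phi_1 \cup \Phi_2)$. The only care needed is the bookkeeping when $\Phi_1$ and $\Phi_2$ share features, but the dot-product notation already absorbs this.

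For the matching upper bound, since $\ell(\cdot, y)$ is convex for every $y$, the map $f \mapsto C_P(f) = \E_{(x,y)\sim P}[\ell(f(x),y)]$ is convex (an expectation of pointwise-convex functions), so $C_P(g) \leq \lambda\, C_P(f_1) + (1-\lambda)\, C_P(f_2)$. Now $C_P(f_i) = C^*_P(\Phi_i)$ because $f_i$ is optimal for $\Phi_i$, and the equivalent-information hypothesis states exactly $C^*_P(\Phi_1) = C^*_P(\Phi_1 \cup \Phi_2) = C^*_P(\Phi_2)$. Therefore the right-hand side equals $C^*_P(\Phi_1 \cup \Phi_2)$, i.e. $C_P(g) \leq C^*_P(\Phi_1 \cup \Phi_2)$.

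Combining the two bounds gives $C_P(g) = C^*_P(\Phi_1 \cup \Phi_2) = C_P(f_1) = C_P(f_2)$, which is the claim. I do not anticipate a genuine obstacle: the result is essentially the point where Proposition~\ref{prop:min} meets Jensen's inequality. The one subtlety worth spelling out is why the lower bound is not vacuous — namely that the hypothesis forces $\Phi_1 \cup \Phi_2$ to carry no more linearly exploitable information about $y$ than $\Phi_1$ or $\Phi_2$ alone — which is precisely what makes both inequalities tight simultaneously.
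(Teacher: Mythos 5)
Your proof is correct and is essentially the same argument as the paper's: the paper invokes in one line that the minimizer set of the convex problem over $\Phi_1\cup\Phi_2$ is convex and contains the zero-padded $\w_1^*$ and $\w_2^*$, while you unpack that abstraction into its two constituent inequalities (the definitional lower bound and the Jensen upper bound). Both rest on exactly the same facts — convexity of $\ell$, linearity of $g$ in the concatenated features, and the equivalent-information hypothesis making $\w_1^*$ and $\w_2^*$ joint minimizers — so there is no genuine difference in route.
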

\begin{proof}
Let $\Phi=\Phi_1\cup\Phi_2$. Because the loss $\ell$ is assumed convex, the solutions of optimization problem \eqref{eq:ecost} form a convex set $S$. Since $C^*_P(\Phi_1){=}C^*_P(\Phi_1 \cup \Phi_2){=}C^*_P(\Phi_2)$, set $S$ contains $w^*_1$ and $w^*_2$, as well as any mixture thereof.
\end{proof}

 We now turn our attention to representations constructed by optimizing 
 both the representation $\Phi$ and the weights $\w$: 
 \begin{equation}
     \label{eq:optrepr}
     \min_\Phi C^*_P(\Phi)  = \min_{\Phi} \min_\w \E_{(x,y)\sim P} [ \ell( \w^\top \Phi(x), y) ]\,.
 \end{equation}
This idealized formulation optimizes the expected error without constraints on the nature and number of features. All its solutions problem carry equivalent information\,:
 \begin{theorem}
 \label{prop:opt}
 Let $\Phi_1$ and $\Phi_2$ be two solutions of problem~\eqref{eq:optrepr}.
 Then $\Phi_1$ and $\Phi_2$ carry equivalent information.
 \end{theorem}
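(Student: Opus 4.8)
The plan is to show that both solutions achieve the same optimal cost, and then that combining them does not lower it further; by Definition~\ref{def:equiv} this is exactly equivalence of information. First I would observe that, since $\Phi_1$ and $\Phi_2$ are both minimizers of $\min_\Phi C^*_P(\Phi)$, they attain the same value of the objective, i.e. $C^*_P(\Phi_1) = C^*_P(\Phi_2)$. Call this common value $C^*$.

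Next I would handle the union. The key point is that $C^*$ is the \emph{global} minimum of $C^*_P(\Phi)$ over all representations $\Phi$, so in particular no representation can beat it: $C^*_P(\Phi_1 \cup \Phi_2) \geq C^*$. On the other hand, Proposition~\ref{prop:min} gives $C^*_P(\Phi_1 \cup \Phi_2) \leq C^*_P(\Phi_2) = C^*$. Combining the two inequalities forces $C^*_P(\Phi_1 \cup \Phi_2) = C^*$, and hence
\[
C^*_P(\Phi_1) = C^*_P(\Phi_1 \cup \Phi_2) = C^*_P(\Phi_2),
\]
which is precisely the condition in Definition~\ref{def:equiv}.

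The argument is short, and the only subtlety worth stating carefully is the claim that $C^*$ is a lower bound for $C^*_P(\Phi_1 \cup \Phi_2)$: this uses that $\Phi_1 \cup \Phi_2$ is itself an admissible representation in the unconstrained problem~\eqref{eq:optrepr}, so its optimal cost cannot be smaller than the optimum of that problem. I expect this to be the main (and essentially only) place one must be slightly careful — everything else is just chaining the two inequalities and invoking the definition. No convexity of $\ell$ is needed here; it was used for Theorem~\ref{prop:ensemble}, but Theorem~\ref{prop:opt} follows purely from optimality and monotonicity.
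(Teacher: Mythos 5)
Your proof is correct and follows essentially the same route as the paper's: both establish $C^*_P(\Phi_1)=C^*_P(\Phi_2)$ from optimality, bound $C^*_P(\Phi_1\cup\Phi_2)$ from below by global optimality of problem~\eqref{eq:optrepr} and from above by Proposition~\ref{prop:min}, and conclude by chaining the inequalities. Your remark that convexity of $\ell$ is not needed here is also accurate.
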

\begin{proof}
Proposition~\ref{prop:min} implies $C^*_P(\Phi_1\cup\Phi_2) \leq C^*_P(\Phi_1)$.
Since $\Phi_1$ and $\Phi_2$ are both solutions of problem~\ref{eq:optrepr}, $C^*_P(\Phi_1) = C^*_P(\Phi_2) \leq C^*_P(\Phi_1\cup\Phi_2) \leq C^*_P(\Phi_1)$. 
\end{proof}

\paragraph{In-distribution viewpoint}

Consider a deep network that is sufficiently overparameterized to accommodate any useful representation in its penultimate layer. Assume that we are able to optimize its expected cost on the training distribution, that is, optimize its in-distribution generalization error.  Although repeated optimization episodes need not return exactly the same representations, Theorem~\ref{prop:opt} tells us that these representations \emph{carry equivalent information}; Definition~\ref{def:equiv} tells us that we cannot either improve the in-distribution test error by linear probing, that is, by training a linear layer on top of the concatenated representations; and Theorem~\ref{prop:ensemble} tells us that we cannot improve the test error with an ensemble of such networks. The performance of ensembles depends on the diversity of their components~\cite{dietterich2000ensemble,ganaie2021ensemble}, and nothing has been done here to obtain diverse networks.

In practice, we cannot truly optimize the expected error of an overparameterized network. The representations obtained with separate training episodes tend to carry equivalent information but will not do so exactly.\footnote{Experience shows however that repeated trainings on large tasks, such as \textsc{ImageNet}, yields networks with remarkably consistent training and testing performances.} Although an ensemble of such identically trained networks can still improve both the training and testing errors, using such similarly trained networks remains a poor way to construct ensembles when one can instead vary the training data, the hyper-parameters, or vary the model structure \cite{ganaie2021ensemble}. Engineering diversity escapes the setup of Theorem~\ref{prop:opt} because each component of the ensemble then solves a different problem. This is obviously better than relying on how the real world deviates from the asymptotic setup.

\paragraph{Out-of-distribution viewpoint}

Assume now that we train our network on a first data distribution $P(x,y)$, but plan to use these networks, or their representations, or their inner layers, with data that follow a different distribution $Q(x,y)$. Doing so also escapes the assumptions of our framework because the definition of representation carrying similar information (Definition~\ref{def:equiv}) critically depends on the data distribution. Representations that carry equivalent information for the training distribution $P$ need not carry equivalent information for a new distribution $Q$ at all.\footnote{Information theoretical concepts are also tied to the assumed data distribution. For instance, whether two features have mutual information critically depends on the assumed data distribution.}

Consider again representations obtained by performing multiple training episodes of the same network that only differ by their random seed.\footnote{The random seed here may determine the initial weights, the composition of the mini-batches, or the data augmentations. It does not affect the data distribution, the model structure, or even the training algorithm hyper-parameters.}  These representations roughly carry equivalent information with respect to the training distribution, but, at the same time, may be very far from carrying equivalent information with respect to a new distribution. 

If this is indeed the case, \emph{constructing an ensemble of such similarly trained networks can have a far greater effect on out-of-distribution data than in-distribution.} 
Experimental results reported in the following sections will demonstrate this effect. In fact, since we cannot know which of these representations or features might prove more informative on the new distribution, it seems wise to keep them all. \emph{Premature feature selection is not a smart way to prepare for distribution changes.}

\paragraph{Optimization dynamics}

There is growing evidence that implicit regularization in deep learning networks is  related to various flavors of sparsity (\eg. \citealp{andriushchenko2022sgd,blanc2020implicit}). In an oversimplified account of this complex literature, the learning process explores the feature space more or less randomly; features that carry incrementally useful information stick more than those who do not. Consider for instance a network with representation~$\Phi_t$ at iteration $t$ and a feature $\varphi\in\Phi_t$ whose information is already present in $\Phi_t\!{\smallsetminus}\{\varphi\}$ in the sense of Definition~\ref{def:notmore}. This feature does not incrementally improve the training distribution performance and therefore may not stick. Yet this feature might contain useful information when compared to a different representation, or when compared to $\Phi_t\!{\smallsetminus}\{\varphi\}$ under a different distribution.

Explicit regularization in deep networks, such as the ubiquitous slight weight decay, also tends to destroy features that appear redundant. \citet{papyan-2020} describes how representations collapse when one trains a network for a very long time. \citet{shwartz2017opening} describe competing processes that create representations and prune representations in all layers at once. 

\begin{table}[t]
    \par\vspace*{-2ex}
    \caption{Impact of L2 weight decay on supervised transfer learning between \textsc{Cifar10} and \textsc{Cifar100}.}
    \label{tab:supervise_transfer_cifar}
    \bigskip\centering
    \resizebox{0.39\textwidth}{!}{
    \begin{tabular}{c|cc}
    \toprule
     L2 weight decay    & $0$  & $5e-4$  \\
         \midrule
         \textsc{Cifar10} &
         91.41$\pm$0.81 & \textbf{94.89$\pm$0.23} \\
         \textsc{Cifar10$\rightarrow$Cifar100} &  
          \textbf{49.68$\pm$0.72} & 29.17$\pm$0.50 \\
        \midrule
         \textsc{Cifar100} &
         70.37$\pm$1.49 &\textbf{76.78$\pm$0.36} \\         
          \textsc{Cifar100$\rightarrow$Cifar10} &  
           \textbf{78.87$\pm$0.98} & 75.92$\pm$0.54 \\
    \bottomrule
    \end{tabular}
    }
\end{table}

Table~\ref{tab:supervise_transfer_cifar} reports on a simple experiment to illustrate how capacity control with regularization can help in-distribution performance but hurt when the distribution changes. We pre-train a \textsc{resnet18} on the \textsc{Cifar10} task and transfer its learned representation to a \textsc{Cifar100} task by linear probing (see setups in appendix \ref{apx:cifar10_100_sl}). %
Although the best in-distribution performance, 94.9\%, is achieved using a slight weight decay, the  representation learned \emph{without weight decay} transfers far better (49.7\% versus 29.2\%). The same observation holds when one reverses the role of the \textsc{Cifar10} and \textsc{Cifar100} datasets. 

\paragraph{Next steps}

The remaining sections of this paper describe experiments that investigate the effect of concatenating representations obtained by multiple training episodes that only differ by their random seed. 

Despite the \emph{intentional lack of diversity} of these ensembles, the performance improvements observed on tasks involving distribution changes are far greater than the in-distribution performance improvements. This proves that representations constructed by multiple training episodes are indeed different. Even though their concatenation carries little additional information for in-distribution, these experiments show how they become substantially more informative when tasks
or distributions change. 

Meanwhile, we obtain worse performance (a) when we train a network whose size matches that of the ensemble from scratch, or (b) when we fine-tune the concatenated representations in a single additional run. We contend that this happens because optimization inherently impoverishes the representations in a manner that makes sense in-distribution but hurts out-of-distribution, and we propose \emph{two-stage fine-tuning} (Figure~\ref{fig:twostageft}) to correct this behavior.

\section{Supervised transfer learning}
\label{sec:supervisedtransfer}

This section focuses on supervised transfer learning scenarios in which the representation learned using an auxiliary supervised task, such as the \textsc{ImageNet} object recognition task \citep{imagenet_cvpr09}, is then used for the target tasks, such as, for our purposes, the \textsc{Cifar10}, \textsc{Cifar100}, and \textsc{Inaturalist18} (\textsc{Inat18}) object recognition tasks \citep{krizhevsky2009learning,van2018inaturalist}. We distinguish the \emph{linear probing} scenario where the penultimate layer features of the pre-trained network are used as inputs for linear classifiers trained on the target tasks, and the \emph{fine tuning} scenario which uses back-propagation to further update the transferred features using the target task training data.\footnote{Code is available at \url{https://github.com/TjuJianyu/RRL}}

\begin{table}
    \par\vspace*{-2ex}
    \caption{Supervised transfer learning from \textsc{ImageNet} to \textsc{Inat18}, \textsc{Cifar100}, and \textsc{Cifar10} using linear probing. The \textsc{erm} {(empirical risk minimization)} rows provide baseline results. The \synthcat$n$ rows use the concatenated representations of $n$ separately trained networks. 
    }
    \label{tab:imagenet_sl_lineareval}
    \bigskip
    \centering
    
    \setlength\tabcolsep{1.2pt}
    \resizebox{0.488\textwidth}{!}{
    \begin{tabular}{c cc  |c| ccc }
    \toprule
                 &              &        &    {\small ID}             &      \multicolumn{3}{c}{\small Linear Probing (OOD)} \\
         method  & architecture     & params & \textsc{\small imagenet} &  \textsc{\small inat18}  & \textsc{\small cifar100} & \textsc{\small cifar10} \\
         \midrule
             \textsc{erm} & \textsc{resnet50}     & 23.5\textsc{m} & 75.58 & 37.91 & 73.23 & 90.57   \\
             \textsc{erm} & \textsc{resnet50w2}   & 93.9\textsc{m} & 77.58	& 37.34 & 72.65 & 90.86		\\
             \textsc{erm} & \textsc{resnet50w4}   & 375\textsc{m}  & 78.46	& 38.71 & 74.81	& 92.13	\\
            \midrule
             \textsc{erm} & 2$\times$\textsc{resnet50}    & 47\textsc{m}   & 75.03	& 39.34 & 74.36	& 90.94	\\
             \textsc{erm} & 4$\times$\textsc{resnet50}    & 94\textsc{m}   & 75.62	& 41.89 & 74.06	& 90.61	\\
             \midrule
             {\synthcat}2& 2$\times$\textsc{resnet50}     & 47\textsc{m}   & 77.57 & 43.26	& 76.10	& 91.86	\\
             {\synthcat}4& 4$\times$\textsc{resnet50}     & 94\textsc{m}   & 78.15 & 46.55	& 78.19	& 93.09	\\
            {\synthcat}10& 10$\times$\textsc{resnet50}    & 235\textsc{m}  & 78.36 & 49.65 	& 79.61	& 93.75	\\
        \bottomrule
    \end{tabular}
    }
\end{table}

\begin{figure*}[t]
    \centering
    \includegraphics[width=\textwidth]{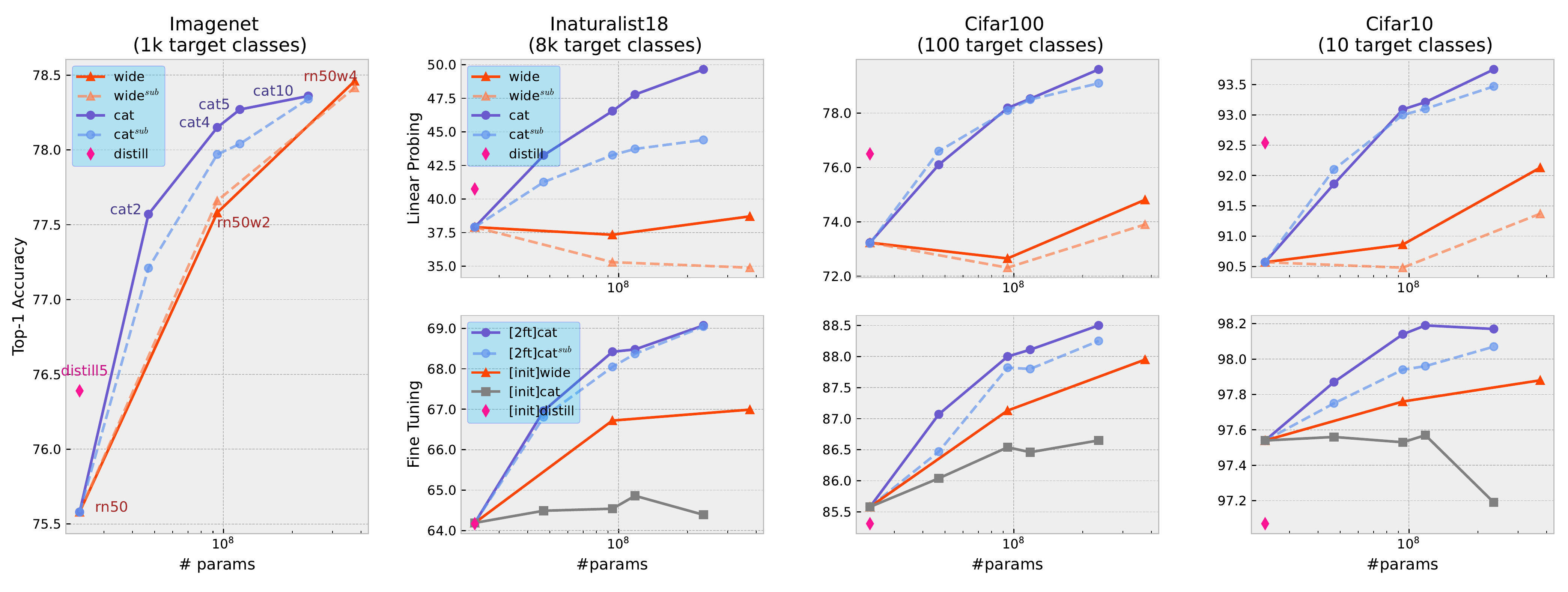}
    \caption{Supervised transfer learning from \textsc{ImageNet} to \textsc{Inat18}, \textsc{Cifar100}, and \textsc{Cifar10}.  The top row shows the superior linear probing performance of the \synthcat$n$ networks (blue, \plotlabel{cat}). The bottom row shows the performance of fine-tuned \synthcat$n$, which is poor with normal fine-tuning (gray, \plotlabel{[init]cat}) and excellent for two-stage fine tuning (blue, \plotlabel{[2ft]cat}). \synthdistill$n$ (pink, \plotlabel{distill}) representation is obtained by distilling \synthcat$n$ into one \textsc{resnet50} (we omit $\synthdistill$ in this section due to the space limit. see details in the appendix \ref{apx:imagenet_sl}). 
    }
    \label{fig:imagenet_sl_ft_2ft}
\end{figure*}
\paragraph{Linear probing}

The first three rows of Table~\ref{tab:imagenet_sl_lineareval}, labeled \textsc{erm}, provide baselines for the linear probing scenario, using respectively a \textsc{resnet50} network \citep{he-2016}, as well as larger variants \textsc{resnet50w}$n$ with $n$ times wider internal representations and roughly $n^2$ times more parameters.  The following two rows provide additional baseline results using networks \textsc{$n\times$resnet50} composed of respectively $n$ separate \textsc{resnet50} networks joined by concatenating their penultimate layers. Although these networks perform relatively poorly on the pre-training task \textsc{ImageNet}, their linear probing performance is substantially better than that of the ordinary \textsc{resnet}s.

The final three rows of Table~\ref{tab:imagenet_sl_lineareval}, labeled \synthcat$n$, are obtained by training $n$ separate \textsc{resnet50} networks on \textsc{ImageNet} with different random seeds, and using their concatenated representations as inputs for a linear classifier trained on the target tasks. This approach yields linear probing performances that substantially exceed that of comparably sized baseline networks. Remarkably, \synthcat$n$, with separately trained components, outperforms the architecturally similar \textsc{$n\times$resnet50} trained as a single network. See appendix \ref{apx:imagenet_sl} for experimental details.

These results are succinctly\footnote{In order to save space, all further results in the main text of this contribution are presented with such plots, with result tables provided in the appendix.} represented in the top row of Figure~\ref{fig:imagenet_sl_ft_2ft}. For each target task \textsc{Inat18}, \textsc{Cifar100}, and \textsc{Cifar10}, the solid curves show the linear probing performance of the baseline \textsc{resnet50w}$n$ (red, labeled \plotlabel{wide}) and of the \synthcat$n$ networks (blue, \plotlabel{cat}) as a function of the number of parameters of their inference architecture. 

The left plot (double height) of Figure~\ref{fig:imagenet_sl_ft_2ft} provides the same information in-distribution, that is, using the pre-training task as target task. In-distribution, the advantage of \synthcat$n$ vanishes when the networks become larger, possibly large enough to approach the conditions of Theorem~\ref{prop:opt}. The out-of-distribution curves (top row) are qualitatively different because they show improved performance all along.

An ensemble of $n$ \textsc{resnet50} networks is architecturally similar to the \synthcat$n$ models. Instead of training a linear classifier on the concatenated features, the ensemble averages $n$ classifiers
independently trained on top of each network. Whether this is beneficial depends on the nature of the target task and its training data (dashed blue, labeled \plotlabel{cat$^{\sf sub}$}). For completeness, we also present an ensemble baseline (dashed red plot, labeled \plotlabel{wide$^{\sf sub}$}) averaging $n$ linear classifiers trained on top of a random partition of the corresponding wide network features.

\paragraph{Fine-tuning} 

Having established, in the linear probing case, that transferring concatenated representations \synthcat$n$ outperforms transferring the representation of an equivalently sized network, we turn our attention to fine-tuning.

Fine-tuning is usually achieved by setting up a linear classifier on top of the transferred feature and training it on the target task data while allowing back-propagation to update the transferred features as well. The bottom row of Figure~\ref{fig:imagenet_sl_ft_2ft} shows the performance of this approach using the baseline network representations (red curve, 
 labeled \plotlabel{[init]wide}) and the concatenated representations (gray curve, labeled \plotlabel{[init]cat}), The latter perform very poorly.\footnote{The poor performance of plain fine-tuning had already been pointed out by \citet{kumar2022finetuning} and \citet{kirichenko2022last}.}

 \begin{figure}[t]
\centering
\includegraphics[height=0.36\linewidth]{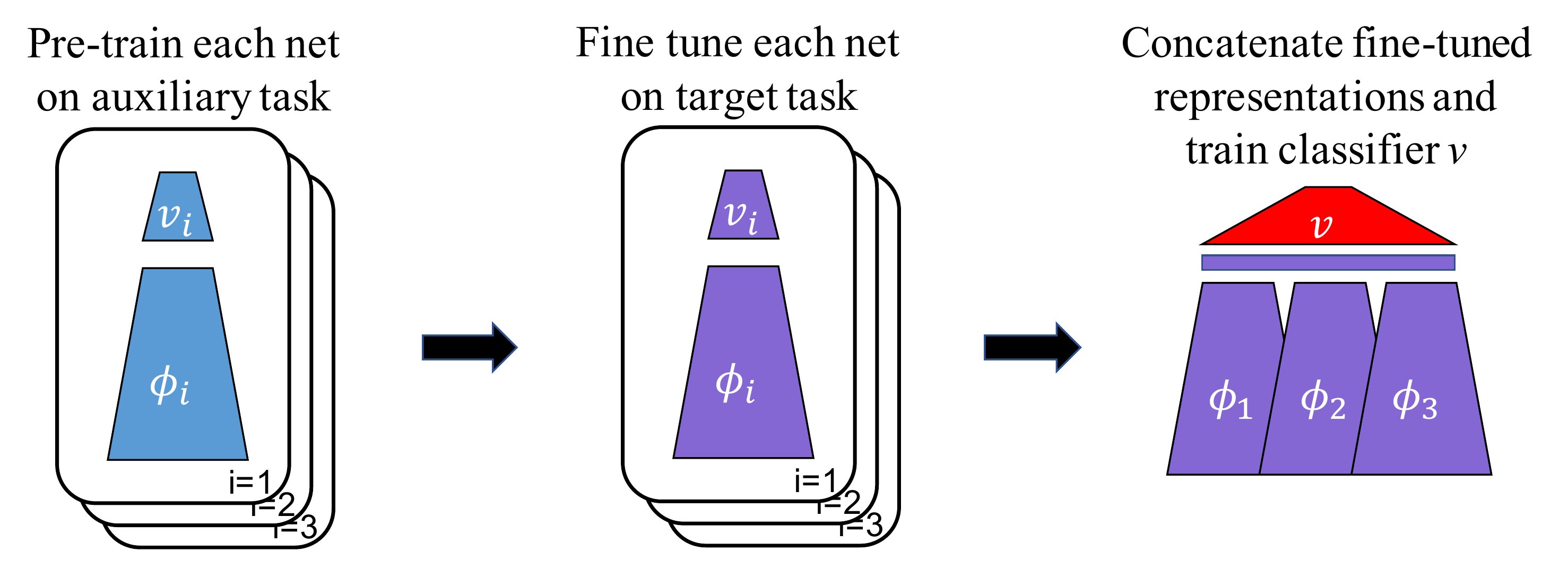}
\caption{\emph{Two-stage fine-tuning} consists of fine-tuning each network separately, then concatenating their feature extractors, now frozen, and training a final classifier. }
\label{fig:twostageft}
\end{figure}

We posit that fine-tuning with a single training episode impoverishes the initially rich representation. Instead, we propose \emph{two-stage fine tuning} which consists
of separately training $n$ networks on the pre-training task, separately fine-tuning them on the target task, and finally training a linear classifier on top of the concatenation of the $n$ separately fine-tuned representations (Figure~\ref{fig:twostageft}). The superior performance of two-stage fine-tuning is clear in the bottom row of Figure~\ref{fig:imagenet_sl_ft_2ft} (blue solid curve, labeled \plotlabel{{[2ft]}cat}). Ensembles of fine-tuned networks perform almost as well (blue dashed curve, labeled \plotlabel{[2ft]cat$^{\sf sub}$}).

The superior \textit{two-stage fune-tuning} performance, compared with the \textit{normal fine-tuning} (gray curve), may look counter-intuitive, since separately fine-tuning $n$ sub-networks is also likely to reduce the richness of the representation due to the in-distribution equivalence of information (Theorem \ref{prop:ensemble}). A similar phenomenon also exists in \textsc{ImageNet} pre-training in Table \ref{tab:imagenet_sl_lineareval}, where the ID (in-distribution) performance of \textsc{Cat}$n$ is substantially better than \textsc{erm} on the same $n\times$\textsc{resnet50} architectures. We believe that the difference is with the dynamics of the optimization process. In appendix \ref{apx:nresnet50_lags}, we show the accuracy of each leg of \textsc{erm} pretrained $n\times$\textsc{resnet50} are very disparate: one leg is doing all the work (The ID \textsc{imagenet} top-1 accuracy difference between legs is as large as 73\%). This is not the case in \textsc{Cat}$n$ pretraining.

\paragraph{Vision transformers}

Figure~\ref{fig:imagenet_vit_sl_tf} shows that transformer networks behave similarly. We carried out supervised transfer experiments using the original vision transformer, \textsc{ViT}, \citep{dosovitskiy2020image}, and using a more advanced version using carefully crafted data augmentations and regularization, \textsc{ViT(augreg)},  \citep{steiner2021train}. 
We use two transformers of two different sizes, {ViT-B/16} and {ViT-L/16}, pre-trained on \textsc{ImageNet21k}.\footnote{Checkpoints provided at \url{https://github.com/google-research/vision_transformer}.} Supervised transfer baselines (red, \plotlabel{wide\&deep} or \plotlabel{[init]wide\&deep}) are obtained by linear-probing and by fine-tuning on \textsc{ImageNet(1k)}. These baselines are outperformed by respectively linear-probing and \emph{two-stage fine tuning} on top of the concatenation of their final representations (\synthcat2). 

An even larger transformer architecture, {ViT-H/14}, yields about the same \textsc{ImageNet1k} fine-tuning performance as {ViT-L/16}, but lags 1\% behind \synthcat2, despite having twice as many parameters \cite{dosovitskiy2020image}. 
Experiments with two-stage fine-tuned \synthcat2 in \textsc{ViT(augreg)} show even better results, possibly because changing the random seed does not just changes the initial weights and the mini-batch composition, but also affects the data augmentations of the \textsc{ViT(augreg)} networks. 
\begin{figure}[h!]
    \centering
    \includegraphics[width=0.48\textwidth]{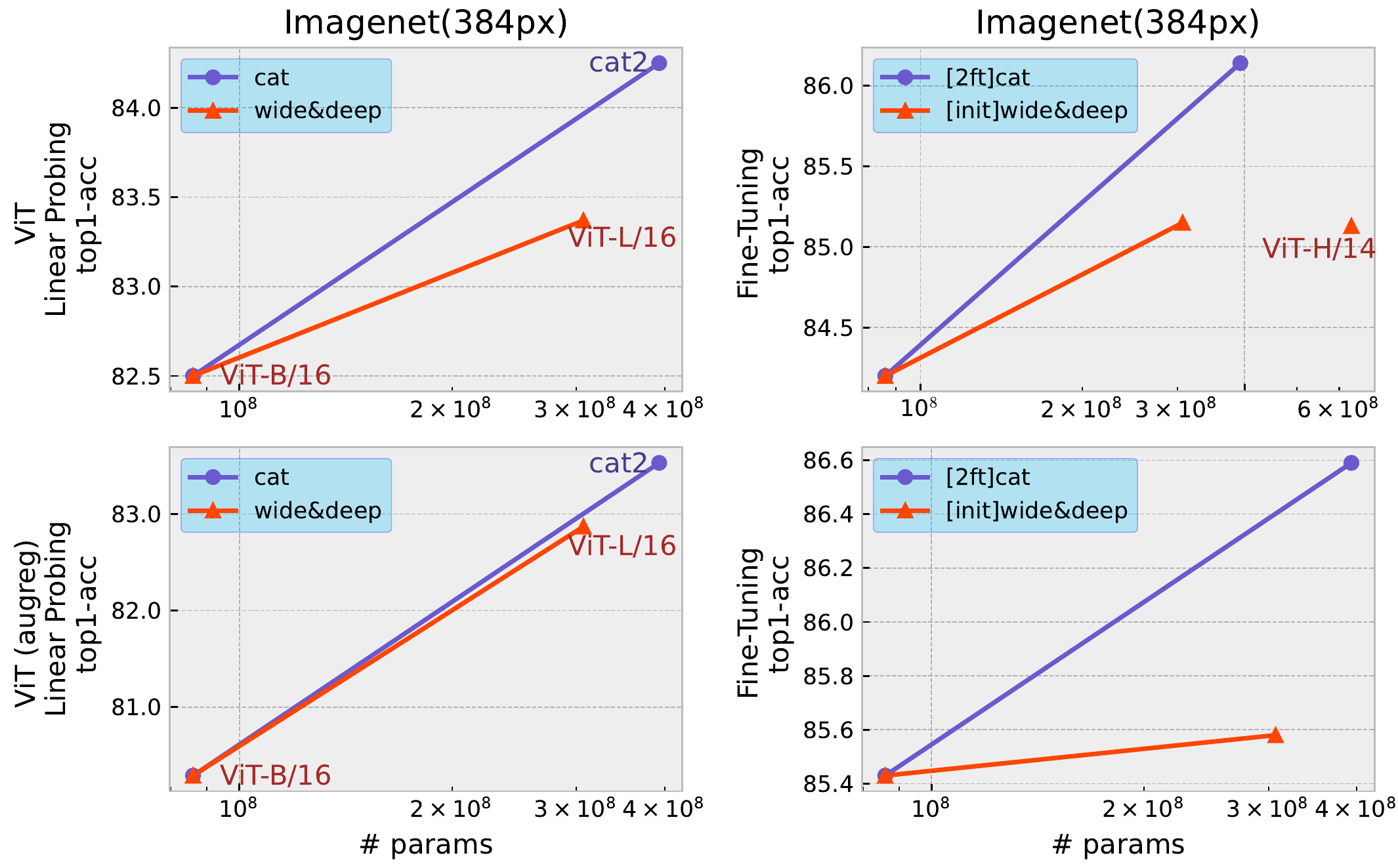}
    \caption{{Supervised transfer learning from \textsc{ImageNet21k} to \textsc{ImageNet} on vision transformers.}}
    \label{fig:imagenet_vit_sl_tf}
\end{figure}

\section{Self-supervised transfer learning}
\label{sec:ssltransfer}

\begin{figure*}[ht]
    \centering
    \includegraphics[height=0.395\textwidth]{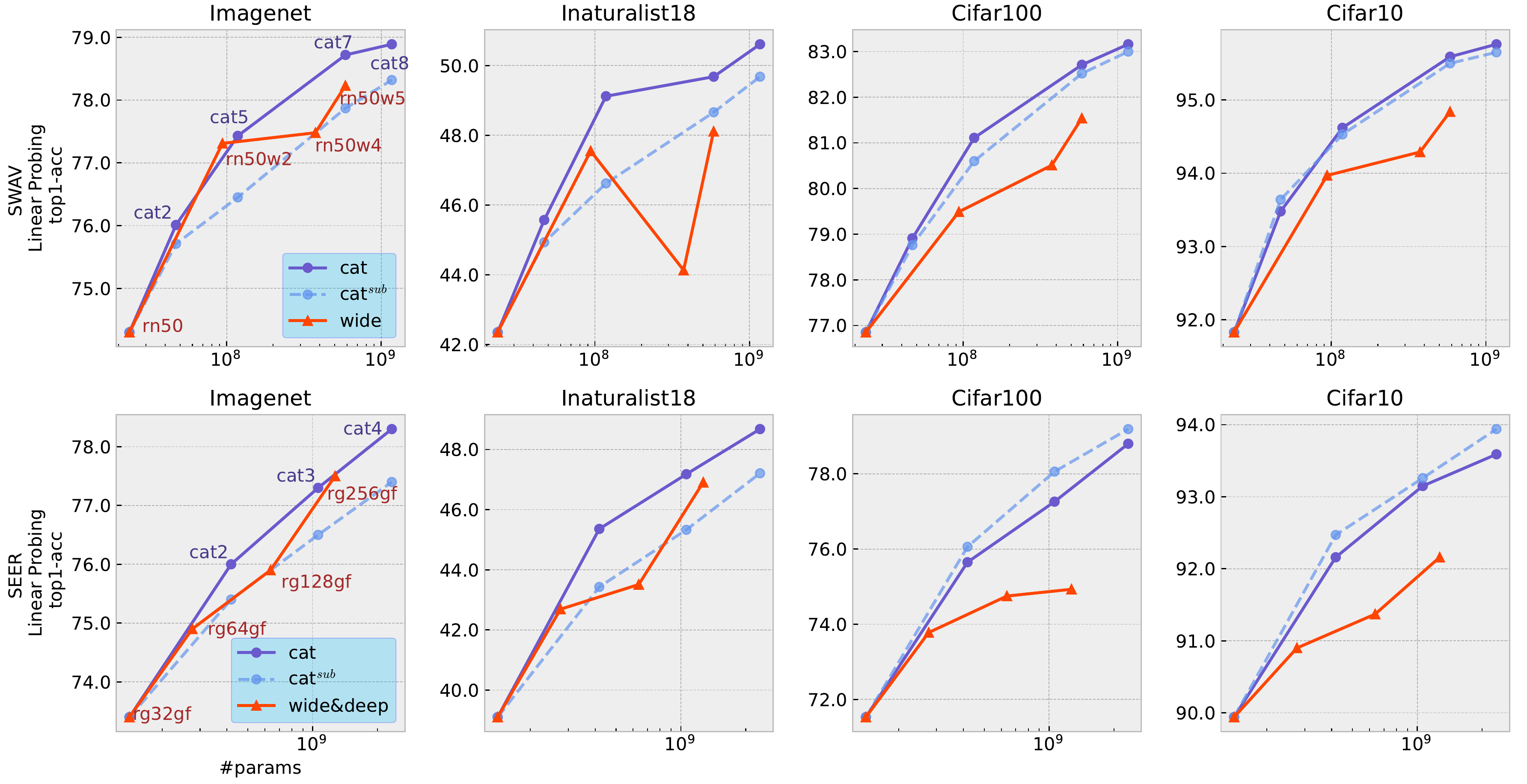}~~~~~
    \includegraphics[height=0.395\textwidth]{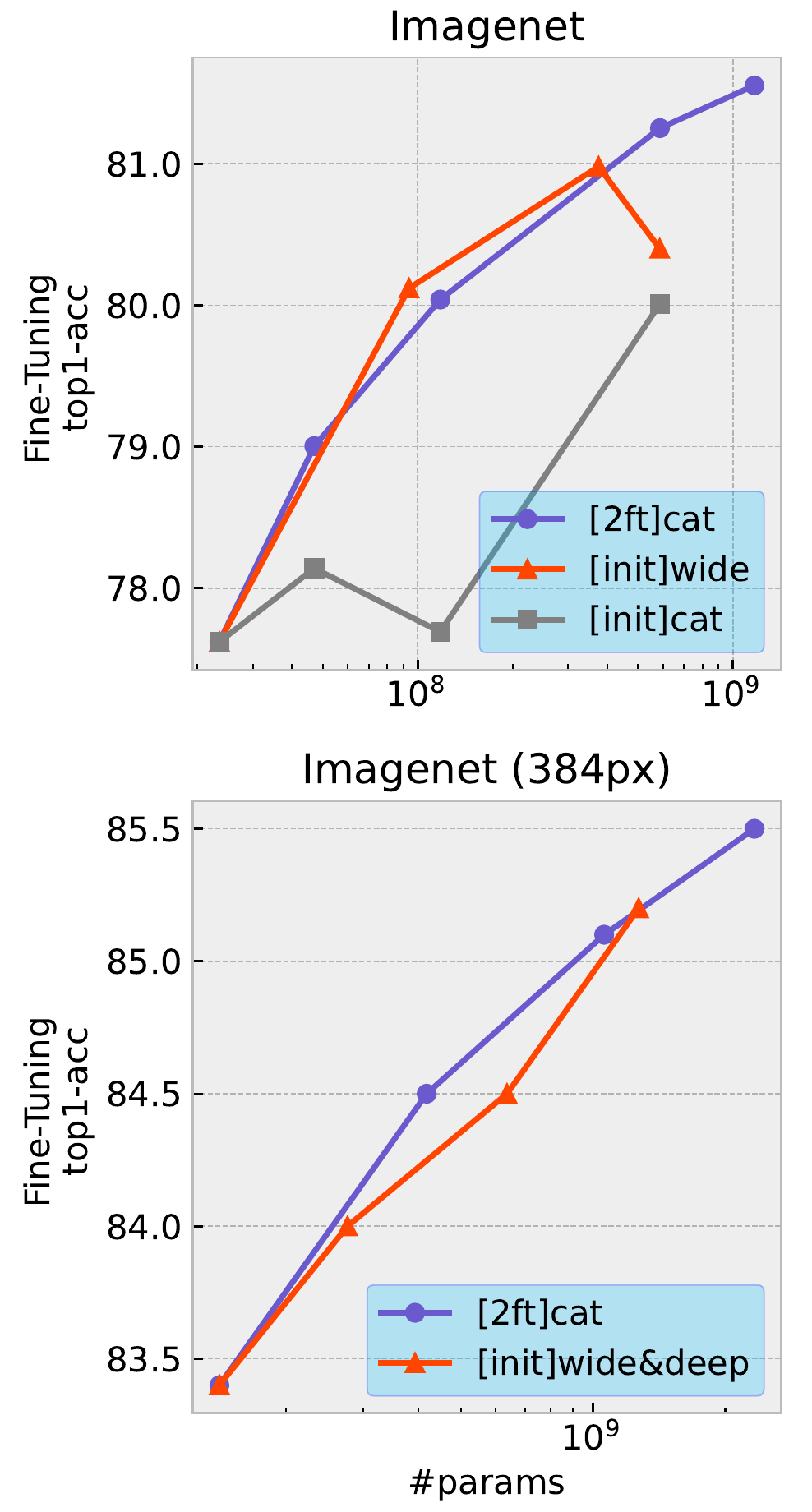}
    \caption{Self-supervised transfer learning with \textsc{swav} trained on unlabeled \textsc{ImageNet(1k)} (\emph{top row}) and with \textsc{seer} on \textsc{Instagram1B} (\emph{bottom row}). The constructed rich representation, \synthcat$n$, yields the best linear probing performance (\plotlabel{cat} and \plotlabel{cat$^{\sf sub}$}) for supervised \textsc{ImageNet}, \textsc{Inat18}, \textsc{Cifar100}, and \textsc{Cifar10} target tasks. The two-stage fine-tuning (\plotlabel{[2ft]cat}) matches equivalently sized baseline models (\plotlabel{[init]wide} and \plotlabel{[init]wide\&deep}), but with much easier training. The sub-networks of \synthcat5 (and \synthcat2) in \textsc{swav} hold the same architecture. Due to the space limitation, we put other fine-tuning curves in appendix \ref{apx:swav_additional_exp}.} 
    \label{fig:ssl_tf}
\end{figure*}

In self-supervised transfer learning (SSL), transferable representations are no longer constructed using a supervised auxiliary task, but using a training criterion that does not involve tedious manual labeling. We focus on schemes that rely on the knowledge of a set of acceptable pattern transformations. The training architecture then resembles a siamese network whose branches process different transformations of the same pattern. The SSL training objective must then balance two terms: on the one hand, the representations computed by each branch must be close or, at least, related; on the other hand, they should be prevented from collapsing partially \citep{jing2021understanding} or catastrophically \citep{chen2020simsiam}.  Although this second term tends to fill the representation with useful features, what is necessary to balance the SSL training objective might still exclude potentially useful features for the target tasks.

This section presents results obtained using \textsc{swav} pre-training using 1.2 million \textsc{ImageNet} images \citep{caron2020unsupervised} and using \textsc{seer} pre-training using 1 billion \textsc{Instagram1B} images \citep{goyal2022vision}. These experiments leverage the pre-trained models made available by the authors: five \textsc{resnet50} (four from our reproduction), one \textsc{resnet50w2}, one \textsc{resnet50w4} and one \textsc{resnet50w5} for the \textsc{swav} experiments;\footnote{\url{https://github.com/facebookresearch/swav}} one \textsc{regnet32gf}, one \textsc{regnet64gf}, one \textsc{regnet128gf}, and one \textsc{regnet256gf} (1.3B parameters) for the \textsc{seer} experiments.\footnote{\url{https://github.com/facebookresearch/vissl/tree/main/projects/SEER}}

The first four columns of Figure~\ref{fig:ssl_tf} present linear probing results for four target object recognition tasks: supervised \textsc{ImageNet}, \textsc{Inaturalist18}, \textsc{Cifar100}, and \textsc{Cifar10}. The baseline curves (red, labeled \plotlabel{wide} or \plotlabel{wide\&deep}) plot the performance of linear classifiers trained on top of the pre-trained SSL representations. The solid \synthcat$n$ curves were obtained by training a linear classifier on top of the concatenated representations of the $n$ smallest SSL pre-trained representations (solid blue, \plotlabel{cat}). The dash \synthcat$n$ curves train an ensemble of $n$ small classifiers on subsets of the concatenated representation (dash blue, \plotlabel{cat$^{\sf sub}$}).\footnote{Likewise the supervised transfer learning experiments, each small classifier learns on the representation of a sub-network (e.g. \textsc{regnet32gf}, \textsc{regnet64gf}). Now the representation subset cannot be treated as random subsets of the concatenated representation anymore, because the model architectures are not always the same. So we omit the ensemble classifiers for red curves.} Overall, the \synthcat$n$ approach offers the best performance.

The last column of Figure~\ref{fig:ssl_tf} presents results with fine-tuning for the supervised \textsc{ImageNet} task. Our \emph{two-stage fine-tuning} approach (as Figure~\ref{fig:twostageft}) matches the performance of equivalently sized baseline networks. In particular, the largest \synthcat4 model using \textsc{seer} pre-training, with 2.3B parameters, achieves 85.5\% correct classification rate, approaching the 85.8\% rate of the largest baseline network in \textsc{seer} \citep{goyal2022vision}, \textsc{regnet10B} with 10B parameters. Of course, separately training and fine-tuning the components of the $\synthcat$4 network is far easier than training a single \textsc{regnet10B} network.

Additional results using \textsc{SimSiam} \citep{chen2020simple} and with distillation are provided in appendix \ref{apx:simsiam_cifar}. Other experiment details are provided in appendix \ref{apx:ssl}.

\section{Meta-learning \& few-shots  learning}
\label{sec:metalearning}

Each target task in the few-shots learning scenario comes with only a few training examples. One must then consider a large collection of target tasks to obtain statistically meaningful results.

We follow the setup of \citet{closelookatfewshot}\footnote{We are aware of various existing few-shot benchmarks, such as MetaDataset \citep{triantafillou2019meta}, that contain more datasets than \citet{chen2020simple}. We choose \citet{chen2020simple}, because it is enough to validate our ideas in section \ref{sec:features}.
} in which the base task is an image classification task with a substantial number of classes and examples per class, and the target tasks are five-way classification problems involving novel classes that are distinct from the base classes and come with only a few examples. Such a problem is often cast as a \emph{meta learning} problem in which the base data is used to learn how to solve a classification problem with only a few examples. \citet{closelookatfewshot} find that excellent performance can be achieved using simple baseline algorithms such as supervised transfer learning with linear probing (\textsc{Baseline}) or with a cosine-based final classifier (\textsc{Baseline++}). These baselines match and sometimes exceed the performance of common few-shots algorithms such as \textsc{maml} \citep{finn2017model}, \textsc{RelationNet} \citep{sung2018learning}, \textsc{MatchingNet} \citep{matchingnet}, and \textsc{ProtoNet} \citep{protonet}.

\begin{figure}[h]
\centering
    \includegraphics[width=0.4\textwidth]{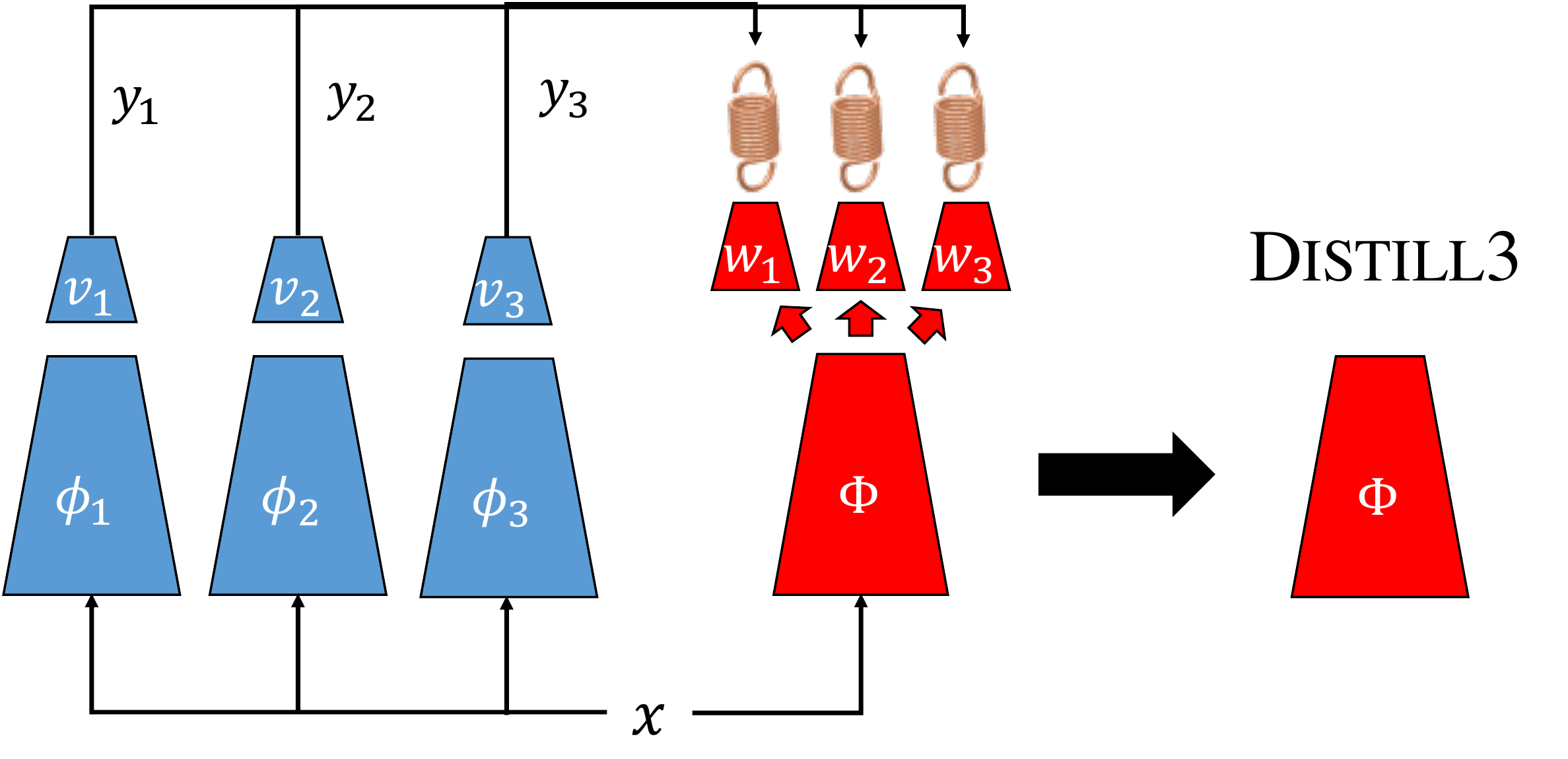}
    \caption{(\synthdistill$n$) A multiple head network (red) trained to predict the outputs of the pre-trained networks  $\Phi_1,\Phi_2,\cdots$ (blue) must develop a representation $\Phi$ that subsumes those of all the blue networks. The same distillation process is used by the \textsc{Bonsai} algorithm \citep{zhang2022rich} but after training the networks with adversarially re-weighted data. }
    \label{fig:distill} 
\end{figure}

\begin{figure}[t]
    \centering
    \includegraphics[width=0.485\textwidth]{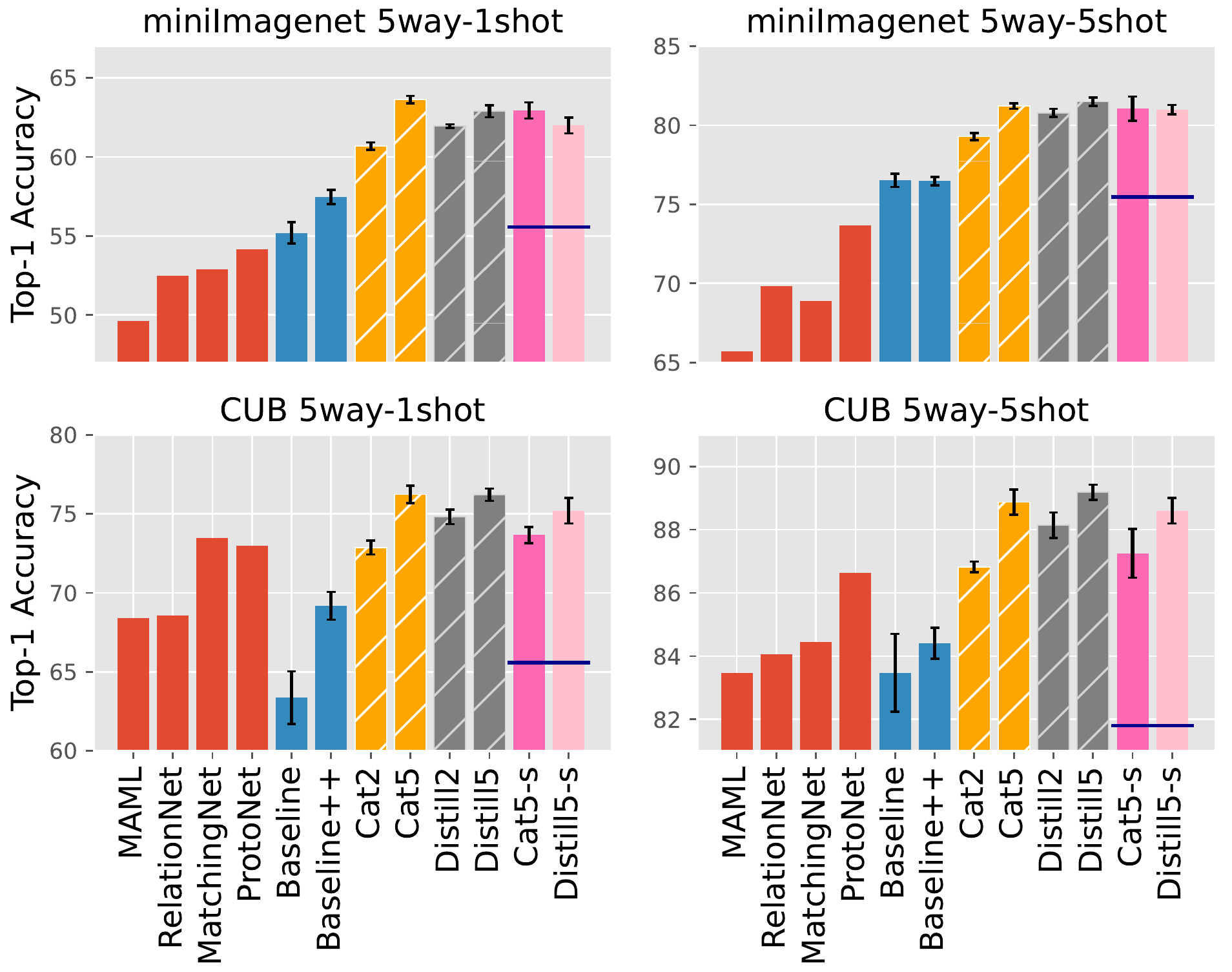}
    \caption{Few-shot learning performance on \textsc{MiniImageNet} and \textsc{Cub}. Four common few-shot learning algorithms are shown in red (results from \citet{closelookatfewshot}). Two supervised transfer methods, with either a linear classifier (\textsc{Baseline}) or cosine-based classifier (\textsc{Baseline++}) are shown in blue. The $\synthdistill$ and $\synthcat$ results, with a cosine-base classifier, are respectively shown in orange and gray. The \synthcat5\textsc{-s} and \synthdistill5\textsc{-s} results were obtained using five snapshots taken during a single training episode with a relatively high step size. The dark blue line shows the best individual snapshot. Standard deviations over five repeats are reported.}
    \label{fig:few_shots}
\end{figure}

Figure~\ref{fig:few_shots} reports results obtained with a \textsc{resnet18} architecture on both the \textsc{MiniImageNet} \citep{matchingnet} and \textsc{Cub} \citep{wah2011caltech} five ways classification tasks with either one or five examples per class as set up by \citet{closelookatfewshot}. The \textsc{maml}, \textsc{RelationNet}, \textsc{MatchingNet}, and \textsc{ProtoNet} results (red bars) are copied verbatim from \citep[table A5]{closelookatfewshot}. The \textsc{Baseline} and \textsc{Baseline++} results were further improved by a systematic L2 weight decay search procedure (see appendix \ref{apx:meta_baseline_pretrain}). All these results show substantial variations across runs, about $4\%$ for \textsc{Cub} and $2\%$ for \textsc{MiniImageNet}.

The \synthcat$n$ and \synthdistill$n$ results were then obtained by first training $n$ \textsc{resnet18} on the base data with different seeds, constructing a combined (rich) representation by either concatenation or  distillation (as Figure~\ref{fig:distill}), then, for each task, training a cosine distance classifier using the representation as input. Despite the high replication variance of the competing results, both $\synthdistill$ and $\synthcat$ show very strong performance. Note that naively increasing model architecture, e.g. from \textsc{resnet18} to \textsc{resnet34}, can only gain limited improvements ($\leq 1\%$, \citet{chen2020simple}, table A5) and is still lagging behind $\synthcat$ and $\synthdistill$.

The pink bars (\synthcat5\textsc{-s} and \synthdistill5\textsc{-s}) in Figure~\ref{fig:few_shots}, concatenate or distill five snapshots taken at regular intervals during a single training episode with a relatively high step size (0.8), achieve a similar few-shots learning performance as \synthcat5 and \synthdistill5, perform substantially better than the best individual snapshot (dark blue line). \emph{It implies that diverse features are discovered and then abandoned but not accumulated during the optimization process.} More results and details, as well as a comparison with conditional meta-learning algorithms \cite{wang2020structured, denevi2022conditional, rusu2018meta}, are shown in appendix \ref{apx:meta-learning}.

\section{Out-of-distribution generalization}
\label{sec:oodlearning}

In the out-of-distribution generalization scenario, we seek a model that performs well on a family of data distributions, also called environments, on the basis of a finite number of training sets distributed according to some of these distributions. \citet{irm} propose an invariance principle to solve such problems and propose the \textsc{IRMv1} algorithm which searches for a good predictor whose final linear layer is simultaneously optimal for all training distributions.
Since then, a number of algorithms exploiting similar ideas have been proposed, such as \textsc{vREx} \citep{vrex}, \textsc{Fishr} \citep{rame2021fishr}, or \textsc{CLOvE} \citep{wald2021calibration}. Theoretical connections have been made with multi-calibration \citep{hebertjohnson18a,wah2011caltech}. Alas, the performance of these algorithms remains wanting \citep{gulrajani2021in}.
\citet{zhang2022rich} attribute this poor performance to the numerical difficulty of optimizing the complicated objective associated with these algorithms. They propose to work around these optimization problems by providing initial weights that already extract a rich palette of potentially interesting features constructed using the \textsc{Bonsai} \citep{zhang2022rich}  algorithm.

Following \citet{zhang2022rich}, we use the \textsc{Camelyon17} tumor classification dataset \citep{bandi2018detection} which contains medical images collected from five hospitals with potentially different devices and procedures. As suggested in \citet{koh2021wilds}, we use the first three hospitals as training environments and the fifth hospital for testing. \textsc{ood}-tuned results are obtained by using the fourth hospital to tune the various hyper-parameters. \textsc{iid}-tuned results only use the training distributions (see details in appendix \ref{apx:ood}). The purpose of our experiments is to investigate whether initializing with the {\synthdistill} or {\synthcat} algorithm provides a computationally attractive alternative to \textsc{Bonsai}.

\begin{table}[t]
\par\vspace*{-2ex}
    \caption{Test accuracy on the \textsc{Camelyon17} dataset with \textsc{DenseNet121}. We compare various initialization (\textsc{ERM}, \synthcat$n$, \synthdistill$n$, and \textsc{Bonsai}) for two algorithms \textsc{vREx} and \textsc{ERM} using either the \textsc{iid} or \textsc{ood} hyperparameter tuning method. The standard deviations over 5 runs are reported.    }
    \par\vspace*{-2ex}
    \label{tab:camelyon17_synt_cat}
    \centering
    \bigskip
    \resizebox{0.485\textwidth}{!}{
    \begin{tabular}{c | cc |cc}
    \toprule
    &      \multicolumn{2}{c|}{\small IID-Tune}      & \multicolumn{2}{c}{\small OOD-Tune} \\
     & {\small \textsc{vREx}} & {\small \textsc{ERM}} & {\small \textsc{vREx}} & {\small \textsc{ERM}} \\
                   \midrule
{\small \textsc{ERM}} & 69.6$\pm$10.5   &   66.6$\pm$9.8   &   70.6$\pm$10.0  &   70.2$\pm$8.7 \\
\midrule
{\synthcat}2 & 74.3$\pm$8.0   &   74.3$\pm$8.0   &   73.7$\pm$8.1   &   74.2$\pm$8.1 \\
{\synthcat}5 & 75.2$\pm$2.9   &   75.0$\pm$2.7   &   74.9$\pm$3.3   &   75.1$\pm$2.8 \\
{\synthcat}20 & 76.4$\pm$0.5   &   76.5$\pm$0.5   &   76.8$\pm$0.9   &   76.4$\pm$0.9 \\
\midrule
{\synthdistill}2 & 67.1$\pm$4.7   &   66.9$\pm$4.8   &   67.4$\pm$4.3   &   66.7$\pm$4.2 \\
{\synthdistill}5 & 69.9$\pm$7.4   &   69.9$\pm$6.9   &   71.8$\pm$5.0   &   69.9$\pm$6.3 \\

{\synthdistill}20 & 73.3$\pm$2.5   &   73.2$\pm$2.3   &   74.8$\pm$3.2   &   73.1$\pm$2.7 \\

   \midrule
    \textsc{Bonsai}2\tablefootnote{We apply \textsc{Bonsai} algorithm with 2 discovery episodes. Check \citet{zhang2022rich} for more details.} &  77.9$\pm$2.7 &  78.2$\pm$2.6 & 79.5$\pm$2.7 & 78.6$\pm$2.6 \\ 
    \bottomrule
    \end{tabular}
    }
    \par\vspace*{-2ex}
\end{table}

Table~\ref{tab:camelyon17_synt_cat} compares the test performance achieved by two algorithms, \textsc{vREx} and \textsc{ERM}, after initializing with \textsc{ERM}, \synthcat$n$, \synthdistill$n$, and \textsc{Bonsai}, in both the \textsc{iid}-tune and \textsc{ood}-tune scenarios.  The {\synthcat} and {\synthdistill} initialization perform better than \textsc{ERM} but not as well as \textsc{Bonsai}. \emph{This result clearly shows the need to research better ways to train networks in a manner that yields diverse representations.} Although this contribution shows that simply changing the seed (as in {\synthcat} and {\synthdistill}) can achieve good results, the experience of deep ensembles \citep{gontijo-lopes2022no} suggests that more refined diversification methods might yield substantially better representations.

\section{Conclusion}

Using a simple theoretical framework and a broad range of experiments, we show that deep learning scenarios that involve changing tasks or distributions are \emph{better served by representations that are richer than those obtained with a single optimization episode.} In a time where many organizations
deploy considerable resources training huge foundational
models, this conclusion should be sobering.

The simple multiple-training-episode approach $\synthcat$ constructs such richer representation with excellent performances in various scenarios. The \emph{two-stage fine tuning} method works around the poor performance of normal fine-tuning in various transfer scenarios.

More importantly, this work provides a lot of room for new representation learning algorithms that move away from relying solely on a single optimization episode.

\section*{Acknowlegments}
The authors acknowledge stimulating discussions with Alexandre Ram\'e, Diane Bouchacourt and David Lopez-Paz. The authors also acknowledge support from the National Science Foundation (NSF Award 1922658) and from the Canadian
Institute for Advanced Research (CIFAR).

\raggedbottom
\bibliography{main.bib}
\bibliographystyle{plainnat}

\clearpage
\onecolumn
\appendix

\begingroup
\begin{center}
    \par\vspace*{-1ex}
    \Large \textbf{Supplementary Material}
\end{center}
\par\vspace{2ex}
\endgroup

\section{\textsc{Cifar} supervised transfer learning}
\label{apx:cifar10_100_sl}
\textsc{Cifar10} supervised transfer learning experiments train a \textsc{resnet18} network on the \textsc{Cifar10} dataset with/without L2 weight decay (4e-5) for $200$ epochs. During training, we use a SGD optimizer \citep{bottou2018optimization} with momentum=0.9, initial learning rate=0.1, cosine learning rate decay, and batch size=128. As to data augmentation, we use \textsc{RandomResizedCrop} (crop scale in $[0.8, 1.0]$), aspect ratio in $[3/4, 4/3]$) and \textsc{RandomHorizontalFlip}. During testing, the input images are resized to $36\times36$ by bicubic interpolation and \textsc{CenterCroped} to $32\times32$. All input images are normalized by $mean=(0.4914, 0.4822, 0.4465), std=(0.2023, 0.1994, 0.2010)$ at the end. 

Then transfer the learned representation to \textsc{Cifar100} dataset by training a last-layer linear classifier (linear probing). The linear layer weights are initialized by Gaussian distribution $\mathcal{N}(0, 0.01)$. The linear probing process shares the same training hyper-parameters as the supervised training part except for a zero L2 weight decay in all cases. 

The \textsc{Cifar100} supervised transfer learning experiments swap the order of \textsc{Cifar100} and \textsc{Cifar10}.

\section{\textsc{ImageNet} supervised transfer learning}
\label{apx:imagenet_sl}
\subsection{Experiment settings}
\label{apx:imagenet_sl_settings}

{\paragraph{Image Preprocessing:} Following \citet{he2016deep}, we use \textsc{RandomHorizontalFlip} and \textsc{RandomResizedCrop} augmentations for all training tasks. For \textsc{ImageNet} and \textsc{Inat18}, the input images are normalized by $mean=(0.485, 0.456, 0.406), std=(0.229, 0.224, 0.225)$. For \textsc{Cifar}, we use the same setting as Appendix \ref{apx:cifar10_100_sl}.}

\paragraph{\textsc{Imagenet} Pretraining:} The \textsc{resnet}s are pre-trained on \textsc{ImageNet} with the popular protocol of \citet{goyal2017accurate}: a SGD optimizer with momentum=0.9, initial learning rate=0.1, batch size=256, L2 weight decay=1e-4, and 90 training epochs. The learning rate is multiplied by 0.1 every 30 epochs. By default, the optimizer in all experiments is SGD with momentum=0.9.

\paragraph{\textsc{Distill}:} To distill the {\synthcat}$n$ representations $[\phi_1, \dots \phi_n]$ ($n\times$\textsc{resnet50}) into a smaller representation $\Phi$ (\textsc{resnet50}), we use the multi-head architecture as Figure \ref{fig:distill}. Inspired by \citet{hinton2015distilling}, we use the Kullback–Leibler divergence loss to learn $\Phi$ as:
\begin{equation}
\label{eq:distill}
    \min_{\Phi, w_0, \dots, w_n}\sum_{i=0}^n\sum_x\bigg[ \tau^2\mathcal{L}_{kl}\Big(s_{\tau}\big({v_i} \circ {\phi_i}(x)\big) ~||~ w_i \circ \Phi(x) \Big)\bigg],
\end{equation}

where $s_{\tau}(v)_i = \frac{e^{v_i/\tau}}{\sum_k{e^{v_k/\tau}}}$ is a softmax function with temperature $\tau$, $v_i$ is the learned last-layer classifier of $i^{th}$ sub-network of \synthcat$n$.

In the {\synthdistill} experiments, we distill five separately trained \textsc{resnet50} into one \textsc{resnet50} according to Eq \ref{eq:distill} with $\tau=10$. We use a SGD optimizer with momentum=0.9, batch size=2048, and weight decay=0. The initial learning rate is 0.1 and warms up to 0.8 within the first 5 epochs. Then learning rate decays to 0.16 and 0.032 at $210^{th}$ and $240^{th}$ epochs, respectively. The total training epochs is 270.

\paragraph{Linear probing:}
\begin{itemize}
    \item \textbf{\textsc{ImageNet}:} The \textsc{ImageNet} linear probing experiments train a linear classifier with the same hyper-parameters as \textsc{ImageNet} pretraining. By default, the last linear classifier in all linear probing experiments is initialized by $\mathcal{N}(0, 0.01)$.
    \item \textbf{\textsc{Inat18}, \textsc{Cifar100}, \textsc{Cifar10}:} Following the settings of \citet{goyal2022vision}, the linear probing experiments (on \textsc{Inat18}, \textsc{Cifar100}, \textsc{Cifar10}) adds a \textsc{BatchNorm} layer before the linear classifier to reduce the hyper-parameter tuning difficulty. The learning rate is initialized to 0.01 and multiplied by 0.1 every 8 epochs. Then train these linear probing tasks for 28 epochs by SGD Nesterov optimizer with momentum=0.9, batch size 256. Note that \textsc{BatchNorm} + a linear classifier is still a linear classifier during inference. We tune L2 weight decay from \{1e-4, 5e-4, 1e-3, 5e-3, 1e-2, 5e-2\} for \textsc{Cifar100} and \textsc{Cifar10}, \{1e-6, 1e-5, 1e-4\} for \textsc{Inat18}.

\end{itemize}

\paragraph{Fine-tuning:} As to the fine-tuning experiments (on \textsc{Cifar100}, \textsc{Cifar10}, and \textsc{Inat18}), we tune the initial learning rate from \{0.005, 0.01, 0.05\}, training epochs from \{50, 100\}. We further tune L2 weight decay from \{0, 1e-5, 1e-4, 5e-4\} for \textsc{Cifar100} and \textsc{Cifar10}, \{1e-6, 1e-5, 1e-4\} for \textsc{Inat18}. A cosine learning rate scheduler is used in fine-tuning experiments. A 0.01 learning rate and 100 training epochs usually provide the best performance for these three datasets. So we fix these two hyperparameters in the following supervised learning two-stage fine-tuning experiments and self-supervised learning experiments.

\paragraph{Two-stage fine-tuning:} For the two-stage fine-tuning experiments, we separately fine-tune each sub-network (i.e. \textsc{resnet50}) of the \textsc{Cat}$n$ architecture by the same protocol as the normal fine-tuning above. Then train a last-layer linear classifier on top of the concatenated fine-tuned representation. The last-layer linear classifier training can be very efficient with a proper weights initialization strategy. In this work, we initialize the last-layer classifier $w$ (including the bias term) by concatenating the last-layer classifier of each fine-tuned sub-network $w_i$, $w \leftarrow {[w_0^\top, \dots, w_n^\top]^\top}/{n}$. Then we only need to train the last-layer classifier $w$ for 1 epoch with a learning rate = $1e-3$ for \textsc{Cifar} and $1e-5$ for \textsc{Inat18}.

\subsection{Performance difference between legs (subnetworks) in \textsc{erm} pretrained $n\times$\textsc{resnet50}}
\label{apx:nresnet50_lags}
Table \ref{tab:subnetwork_acc} showcases the performance difference between legs of \textsc{erm} pretrained $n\times$\textsc{resnet50}. In the $n\times$\textsc{resnet50}, one leg is doing all the work. In the \textsc{Cat}$n$ pretrained $n\times$\textsc{resnet50}, this is not the case. We believe the difference comes from optimization dynamics.
\begin{table*}[h]
    \centering
     \caption{Top-1 \textsc{ImageNet} accuracy of each leg 
     (\textsc{resnet50}) of \textsc{erm} pre-trained $n$\textsc{resnet50}. To solely showcase the difference between the representation of legs, we report the training accuracy of fitting a linear classifier on top of the penultimate layer representation of each leg (subnetwork). }
     \bigskip
    \begin{tabular}{c|cccc}
    \toprule
    & subnetwork0 & subnetwork1 & subnetwork2 & subnetwork3 \\
    \midrule
2$\times$\textsc{resnet50} & 73.94 & 18.05 & - & - \\
4$\times$\textsc{resnet50} & 9.25 & 74.33 & 0.40 & 0.96 \\
\bottomrule
    \end{tabular}
    \label{tab:subnetwork_acc}
\end{table*}

\subsection{Experiments on a deeper architecture: \textsc{resnet152}}
\label{apx:imagenet_sl_resnet152}

Similar to Table  \ref{tab:imagenet_sl_lineareval} in section \ref{sec:supervisedtransfer}, Table \ref{tab:imagenet_sl_rn152_lineareval} provides similar experiments on a deeper architecture \textsc{resnet152}. {\synthcat}$n$ exceeds ERM on \textsc{ImageNet}, \textsc{Cifar10}, \textsc{Cifar100} , and \textsc{Inat18} linear probing tasks. 
\begin{table}[ht]
    \caption{Imagenet supervised transfer learning performance on a deep architecture \textsc{resnet152}.}
    \bigskip
    \label{tab:imagenet_sl_rn152_lineareval}
    \centering
   
    \begin{tabular}{cc |c| ccc }
    \toprule
                 &               &  ID      & \multicolumn{3}{c}{Linear Probing (OOD)} \\ 
         method  & architecture  & \textsc{ImageNet} & \textsc{Cifar10} & \textsc{Cifar100} & \textsc{Inat18}  \\
         \midrule
             ERM & \textsc{resnet152}     & 77.89 &	92.50 &	76.23 & 39.70 \\
        \midrule
             {\synthcat}2& $2\times$\textsc{resnet152}      & 79.34 &	94.26 &	79.15 & 45.42\\
             {\synthcat}5& $5\times$\textsc{resnet152}      & 80.14 &	94.91 &	81.35 & 50.32\\
            {\synthcat}10& $10\times$\textsc{resnet152}      & 80.18 &	95.38 &	82.39 & 52.73 \\
        \bottomrule
    \end{tabular}
    
\end{table}

\subsection{Fine-tuning experiments}
For reference, Table \ref{tab:imagenet_sl_ft_2ft} provides numerical results for the fine-tuning experiments of Figure \ref{fig:imagenet_sl_ft_2ft}.
\begin{table}[ht]
    \caption{Supervised transfer learning by either normal fine-tuning or proposed two-stage fine-tuning. Various representations are pre-trained on \textsc{ImageNet} and then fine-tuned or two-stage fine-tuned on \textsc{Cifar10}, \textsc{Cifar100}, \textsc{Inat18} tasks. }
    \bigskip
    \label{tab:imagenet_sl_ft_2ft}
    \centering

    \begin{tabular}{cc c | cc c | ccc}
    \toprule
                 &              &        & \multicolumn{3}{c|}{fine-tuning} & \multicolumn{3}{c}{two-stage fine-tuning} \\
         method  & architecture          &params &  \textsc{Cifar10}  &  \textsc{Cifar100} &  \textsc{Inat18} &  \textsc{Cifar10} &  \textsc{Cifar100} &  \textsc{Inat18} \\
         
         \midrule
             ERM & \textsc{resnet50}     & 23.5M & 97.54	& 85.58	& 64.19 &  - & 	- & - \\
        \midrule
             ERM & \textsc{resnet50w2}   & 93.9M & 97.76	& 87.13	& 66.72 &  - & 	- & - \\
             ERM & \textsc{resnet50w4}   & 375M  & 97.88	& 87.95 & 66.99 &  - & 	- & -\\
             ERM & \textsc{2$\times$resnet50}    & 47M   & 97.39	& 85.77	& 62.57 &  - & 	- & - \\
             ERM & \textsc{4$\times$resnet50}    & 94M   & 97.38	& 85.56	& 61.58 &  - & 	- & - \\
             \midrule
             {\synthcat}2& \textsc{2$\times$resnet50}     & 47M   & 97.56	& 86.04	& 64.49 & 97.87	& 87.07	& 66.96\\
             {\synthcat}4& \textsc{4$\times$resnet50}     & 94M   & 97.53	& 86.54	& 64.54 & 98.14	& 88.00	& 68.42\\
             {\synthcat}5& \textsc{5$\times$resnet50}     & 118M  & 97.57	& 86.46	& 64.86 & 98.19	& 88.11	& 68.48\\
            {\synthcat}10& \textsc{10$\times$resnet50}     & 235M  & 97.19	& 86.65 & 64.39 & 98.17	& 88.50	& 69.07\\
            \midrule
            {\synthdistill}5& \textsc{resnet50}     & 23.5M & 97.07	& 85.31	& 64.17 &  - & 	- & -\\
        \bottomrule
    \end{tabular}
    
\end{table}

\subsection{Vision transformer Experiment settings}
{For all vision transformer experiments, we keep the input image resolution at 384 $\times$ 384 and follow a similar protocol as appendix \ref{apx:imagenet_sl_settings}. Specifically, we use a weight decay=5e-4 and a batch size=256 for linear probing, a weight decay=0 and a batch size=512 (following the \citet{dosovitskiy2020image} settings) for fine-tuning and two-stage fine-tuning. Following \citet{dosovitskiy2020image}, all input images are normalized by $mean=(0.5, 0.5, 0.5), std=(0.5, 0.5, 0.5)$.}

\section{Self-supervised transfer learning}
\label{apx:ssl}
\subsection{SWAV on \textsc{ImageNet}}
SWAV is a contrastive self-supervised learning algorithm proposed by \citet{caron2020unsupervised}. We train \textsc{resnet50} on \textsc{ImageNet}\footnote{\url{https://github.com/facebookresearch/swav/blob/main/scripts/swav_400ep_pretrain.sh}} by the SWAV algorithm four times, which gives us four pretrained \textsc{resnet50} models. As to the rest four SWAV pre-trained models in this work, we  use the public available \textsc{resnet50}\footnote{\url{https://dl.fbaipublicfiles.com/deepcluster/swav_400ep_pretrain.pth.tar}}, \textsc{resnet50w2}\footnote{\url{https://dl.fbaipublicfiles.com/deepcluster/swav_RN50w2_400ep_pretrain.pth.tar}}, \textsc{resnet50w4}\footnote{\url{https://dl.fbaipublicfiles.com/deepcluster/swav_RN50w4_400ep_pretrain.pth.tar}}, and \textsc{resnet50w5}\footnote{\url{https://dl.fbaipublicfiles.com/deepcluster/swav_RN50w5_400ep_pretrain.pth.tar}} checkpoints.

\paragraph{Linear probing:} Following the settings in \citet{goyal2022vision}, the linear probing experiments (on \textsc{ImageNet}, \textsc{Inat18}, \textsc{Cifar100}, \textsc{Cifar10}) add a \textsc{BatchNorm} layer before the last-layer linear classifier to reduce the hyper-parameter tuning difficulty. The learning rate is initialized to 0.01 and multiplied by 0.1 every 8 epochs. Then train these linear probing tasks for 28 epochs by SGD Nesterov optimizer with momentum=0.9. We search L2 weight decay from $\{5e-4\}$, $\{5e-4, 1e-3, 5e-3, 1e-2\}$, and $\{1e-6, 1e-5, 1e-4\}$ for \textsc{ImageNet}, \textsc{Cifar}, and \textsc{Inat18} tasks, respectively. 

\paragraph{Fine-tuning:} 
\begin{itemize}
    \item \textbf{\textsc{ImageNet}}: Inspired by the semi-supervised \textsc{ImageNet} fine-tuning settings in \citet{caron2020unsupervised}, we attach a randomly initialized last-layer classifier on top of the SSL learned representation. Then fine-tune all parameters, using a SGD optimizer with momentum=0.9 and L2 weight decay=0. Low-layers representation and last-layer classifier use different initial learning rates of 0.01 and 0.2, respectively. The learning rate is multiplied by 0.2 at $12^{th}$ and $16^{th}$ epochs. We train 20 epochs for networks: \textsc{resnet50}, \textsc{resnet50w2}, \textsc{resnet50w4}. We further search training epochs from $\{10, 20\}$ for the wide network (due to overfitting), \textsc{resnet50w5} and then select the best one with 10 training epochs. 
    \item \textbf{\textsc{Cifar10, Cifar100, Inat18}}: Same as the fine-tuning settings in supervised transfer learning in Appendix \ref{apx:imagenet_sl_settings}.

\end{itemize}

\paragraph{Two-stage fine-tuning:} 
\begin{itemize}
    \item  \textbf{\textsc{ImageNet}}: Similar to the two-stage fine-tuning settings in supervised transfer learning, we initialize the last-layer classifier $w$ by concatenation and then train 1 epoch with learning rate=0.001, L2 weight decay=0. 
    \item \textbf{\textsc{Cifar10, Cifar100, Inat18}}: For \textsc{Cifar10, Cifar100}, we use same two-stage fine-tuning settings as in supervised transfer learning in Appendix \ref{apx:imagenet_sl_settings}. For \textsc{Inat18}, we attach a \textsc{BatchNorm} layer before the last-layer linear classifier to reduce the training difficulty. Note that \textsc{BatchNorm} + a linear classifier is still a linear classifier during inference. Following the linear probing protocol, we train the \textsc{BatchNorm} and linear layers by a SGD optimizer with momentum=0.9, initial learning rate=0.01, and a 0.2 learning rate decay at $12^{th}$ and $16^{th}$ epochs. As to L2 weight decay,  we use the same searching space as in the fine-tuning. 
\end{itemize}

\subsubsection{Additional results}
\label{apx:swav_additional_exp}
Beside the SWAV \textsc{ImageNet} fine-tuning experiments in Figure \ref{fig:ssl_tf}, Figure \ref{fig:swav_ft_full} provides additional SWAV fine-tuning / two-stage fine-tuning results on  \textsc{naturalist18}, \textsc{Cifar100}, and \textsc{Cifar10} tasks. We give a {\plotlabel{[init]cat}} curve on the \textsc{ImageNet} task, but omit the curves on other tasks (\textsc{naturalist18}, \textsc{Cifar100}, and \textsc{Cifar10}) because they are computationally costly. 
\begin{figure}[ht]
    \centering
    \includegraphics[width=\textwidth]{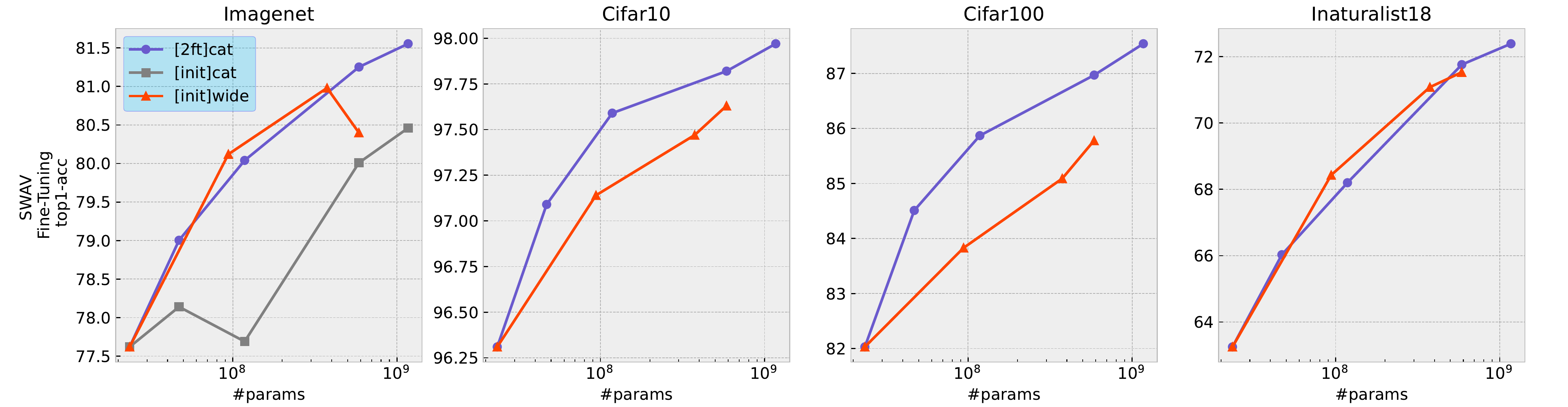}
    \caption{Fine-tuning performance of SWAV on \textsc{ImageNet}, \textsc{naturalist18}, \textsc{Cifar100}, and \textsc{Cifar10} tasks. SWAV is trained on unlabeled \textsc{ImageNet}. {\plotlabel{[2ft]cat}} and {\plotlabel{[init]cat} indicate our two-stage fine-tuning strategy and the normal fine-tuning strategy on $n$ concatenated networks. {\plotlabel{[init]wide}} refers to the normal fine-tuning strategy on wide networks, i.e. \textsc{resnet50}, \textsc{resnet50w2}, \textsc{resnet50w4}, and \textsc{resnet50w5}.    }}
    \label{fig:swav_ft_full}
\end{figure}

\subsection{SEER on \textsc{Instagram1B}}
SEER \citep{goyal2022vision} trains large \textsc{regnet\{32gf, 64gf, 128gf, 256gf, 10B\}} architectures on the \textsc{Instagram1B} dataset with 1 billion Instagram images, using the SWAV contrastive self-supervised learning algorithm.

\paragraph{Linear Probing:} Same as the linear probing settings in SWAV. 

\paragraph{Fine-tuning:} We use SEER checkpoints\footnote{\url{https://github.com/facebookresearch/vissl/tree/main/projects/SEER}} fine-tuned on \textsc{ImageNet} with $384 \times 384$ resolutions. It is fine-tuned on \textsc{ImageNet} for 15 epochs using SGD momentum 0.9, weight decay 1e-4, learning rate 0.04 and batch size 256. The learning rate is multiplied by 0.1 at $8^{th}$ and $12^{th}$ epochs.

\paragraph{Two-stage Fine-tuning:} We keep L2 weight decay 1e-4 the same as fine-tuning. Then keep the other settings the same as in SWAV.

\subsection{Additional experiment: \textsc{SimSiam} on \textsc{Cifar}}
\label{apx:simsiam_cifar}
\textsc{SimSiam} \cite{chen2020simsiam} is a non-contrastive self-supervised learning algorithm. In this section, we pre-train the networks using \textsc{SimSiam} on \textsc{Cifar10}, then transfer the learned representation by linear probing to   \textsc{Cifar10}, \textsc{Cifar100},  \textsc{Cifar10} with 1\% training examples, and \textsc{Cifar100} with 10\% training examples. 

\paragraph{\textsc{SimSiam} pre-training} Following \citet{chen2020simsiam} we pre-train \textsc{resnet18}, \textsc{resnet18w2}, \textsc{resnet18w4}, \textsc{2resnet18}, and \textsc{4resnet18} on \textsc{Cifar10} ($32 \times 32$ resolution) by \textsc{SimSiam} for $800$ epochs, using a SGD optimizer with momentum = $0.9$, initial learning rate = $0.06$, batch size = $512$, L2 weight decay = $5e-4$, and cosine learning rate scheduler. The data augmentations include \textsc{RandomResizedCrop} (crop scale in $[0.2, 1]$), \textsc{RandomHorizontalFlip}, \textsc{RandomGrayScale} ($p=0.2$), and a random applied \textsc{ColorJitter} ($0.4, 0.4, 0.4, 0.1$) with probability $0.8$. All images are normalized by $mean=(0.4914, 0.4822, 0.4465), std=(0.2023, 0.1994, 0.2010)$ before training.

\paragraph{\textsc{\synthdistill}} Since self-supervised learning tasks don't contain target labels as supervised learning, we apply knowledge distillation on representation directly. Specifically, we set $v_1, \dots v_n$ in Figure \ref{fig:distill} as Identity matrices. Then we distill $[\phi_1, \dots, \phi_n]$ into $\Phi$ by use a cosine loss: 

\begin{equation}
    \min_{\Phi, w_0, \dots, w_n}\sum_{i=0}^n\sum_x\bigg[  1-\cos\Big({\phi_i}(x)~, ~ w_i \circ \Phi(x) \Big)\bigg]
\end{equation}

\paragraph{Linear Probing:} Following again the settings of \citet{goyal2022vision}, the linear probing experiments
(on \textsc{Cifar100}, \textsc{Cifar10}, \textsc{Cifar100(1\%)} with 10\% training data, and \textsc{Cifar10(1\%)} with 1\% training data) adds a \textsc{BatchNorm} layer before the last-layer
linear classifier to reduce the hyper-parameter tuning difficulty. We use batch size = 256 for \textsc{Cifar100} and \textsc{Cifar10}, use batch size = 32 for corresponding sampled (10\%/1\%) version. Then we search initial learning rate from $\{0.1, 0.01\}$, L2 weight decay from \{1e-4, 5e-4, 1e-3, 5e-3\}. The learning rate is multiplied by 0.1 every 8 epochs during the total 28 training epochs. As to the optimizer, all experiments use a SGD Nesterov optimizer with momentum=0.9. 

\paragraph{Results}
Table \ref{tab:simsiam} shows the linear probing accuracy of \textsc{SimSiam} learned representation on various datasets and architectures. When linear probing on the same \textsc{Cifar10} dataset as training, the {\synthcat}$n$ method performs slightly better than width architectures (e.g. \textsc{resnet18w2} and \textsc{resnet18w4}). When comparing them on the  \textsc{Cifar100} dataset (OOD), however, {\synthcat}$n$ exceeds width architectures.

\begin{table}[ht]
    \centering
    \caption{Linear probing accuracy on \textsc{Cifar100}, \textsc{Cifar10}, \textsc{Cifar100(1\%)}, and \textsc{Cifar10(10\%)} tasks. The representation is learned on \textsc{Cifar10} by \textsc{SimSiam} algorithm. {\synthcat}$n$ concatenates $n$ learned representation before linear probing. {\synthdistill}$n$ distills $n$ learned representation into $\textsc{resnet18}$ before linear probing. \textsc{resnet18w}$n$ contains around $n^2$ parameters as {\textsc{resnet18}}.}
    \bigskip

    \begin{tabular}{cc|cc|cc}
    \toprule
    &  &\multicolumn{2}{c|}{Linear Probing (ID)} & \multicolumn{2}{c}{Linear Probing (OOD) } \\
    method & architecture & \textsc{Cifar10} & \textsc{Cifar10(1\%)} & \textsc{Cifar100} &  \textsc{Cifar100(10\%)} \\
    \midrule
    \textsc{SimSiam}  &\textsc{resnet18}       &  91.88   &87.60   &  55.29   &    42.93 \\
    \midrule
    \textsc{SimSiam}  &\textsc{resnet18w2}     &  92.88   & 88.95   &  59.41   &   45.39\\
    \textsc{SimSiam}  &\textsc{resnet18w4}     &  93.50   &90.45   &  59.28   &    44.98\\
    \textsc{SimSiam}  &\textsc{2resnet18}      &  91.62   &87.14   &  55.67   &    43.07\\
    \textsc{SimSiam}  &\textsc{4resnet18}      &  92.54   &85.65   &  64.42   &    49.65\\
    \midrule
    {\synthcat}2 & 2$\times$\textsc{resnet18}      &  92.94     & 88.32   & 59.40   &  46.06\\
    {\synthcat}4 & 4$\times$\textsc{resnet18}      &  93.42   & 88.81   &  63.06   &   47.48\\
    {\synthcat}5 & 5$\times$\textsc{resnet18}      &  93.67   & 88.78   & 63.71   &    48.31\\
   {\synthcat}10 & 10$\times$\textsc{resnet18}      &  93.75   & 88.65   &   66.19   &  49.90\\
   \midrule
{\synthdistill}2 & 2$\times$\textsc{resnet18}       &93.04	&88.59	&59.65	&45.10 \\
{\synthdistill}5 & 5$\times$\textsc{resnet18}       &93.02	&88.56	&60.79	&46.41 \\
{\synthdistill}10 & 10$\times$\textsc{resnet18}      &93.11	&88.72	&61.35	&46.75 \\
    \bottomrule
    \end{tabular}
    
    \label{tab:simsiam}
\end{table}

\subsection{Numerical results}
For reference, Tables \ref{tab:swav} and \ref{tab:seer} provide the numerical results for the linear probing, fine-tuning, and two-stage fine-tuning plots of Figure \ref{fig:ssl_tf}.
\begin{table}[ht]
    \centering
    \caption{Linear probing, fine-tuning, and two-stage fine-tuning performance of SWAV pre-trained representation and corresponding {\synthcat}$n$ representations. }
    \label{tab:swav}
    \bigskip

    \begin{tabular}{ccc|cccc|c|c}
\toprule
                &            &        &     \multicolumn{4}{c|}{ \footnotesize linear-probing}     & \footnotesize fine-tuning & \footnotesize two-stage ft \\   
\footnotesize method     & \footnotesize architecture   & \footnotesize params & \footnotesize \textsc{ ImageNet} & \footnotesize \textsc{Cifar10} & \footnotesize \textsc{Cifar100} & \footnotesize \textsc{Inat18} & \footnotesize \textsc{Imagenet} & \footnotesize \textsc{Imagenet} \\
\midrule
SWAV             & \textsc{resnet50}   & 23.5M & 74.30 & 91.83 & 76.85 & 42.35 & 77.62 & - \\
SWAV              & \textsc{resnet50w2} & 93.9M & 77.31 & 93.97 & 79.49 & 47.55 & 80.12 & - \\
SWAV              & \textsc{resnet50w4} & 375M & 77.48  & 94.29 &	80.51	& 44.13 & 80.98 & - \\
SWAV              &  \textsc{resnet50w5} & 586M  & 78.23 & 94.84 & 81.54 & 48.11 & 80.40 & - \\
\midrule
{\synthcat}2          & - & 47M  & 76.01 & 93.48 & 78.91 & 45.57 & 78.14 & 79.00 \\
{\synthcat}5          & - & 118M & 77.43 & 94.62 & 81.11 & 49.12 &  77.69     & 80.04 \\
{\synthcat}7    & - & 587M & 78.72 & 95.59 & 82.71 & 49.68 & 80.05  & 81.25 \\
{\synthcat}9 & - & 1170M& 78.89 & 95.76 & 83.16 & 50.61 & 80.46 & 81.55 \\
\bottomrule
    \end{tabular}
    
\end{table}

\begin{table}[ht]
    \centering
    \caption{Linear probing, fine-tuning, and two-stage fine-tuning performance of SEER pre-trained representation and corresponding {\synthcat}$n$ representations. }
    \bigskip
    \label{tab:seer}

    \begin{tabular}{ccc|cccc|c|c}
\toprule
                &            &        &     \multicolumn{4}{c|}{ \footnotesize linear-probing}     & \footnotesize fine-tuning & \footnotesize two-stage ft \\   
\footnotesize method     & \footnotesize architecture   & \footnotesize params & \footnotesize \textsc{ ImageNet} & \footnotesize \textsc{Cifar10} & \footnotesize \textsc{Cifar100} & \footnotesize \textsc{Inat18} & \footnotesize \makecell{\textsc{Imagenet} \\ (384px)} & \footnotesize \makecell{\textsc{Imagenet} \\ (384px)} \\
\midrule
SEER              & \textsc{regnet32gf}   & 141M & 73.4   &   89.94   &   71.53   & 39.10     &   83.4   &   - \\
SEER              & \textsc{regnet64gf}   & 276M & 74.9   &   90.90   &   73.78   &  42.69    &   84.0   &   - \\ 
SEER              & \textsc{regnet128gf}  & 637M  & 75.9   &   91.37   &   74.75   &  43.51    &   84.5   &   - \\
SEER              & \textsc{regnet256gf}  & 1270M  & 77.5   &   92.16   &   74.93   &  46.91    &   85.2   &   - \\
\midrule
{\synthcat}2             & - & 418M  & 76.0   &   92.16   &   75.65   &  45.36    &  -    &   84.5 \\
{\synthcat}3          & - & 1060M  & 77.3   &   93.15   &   77.26   &  47.18    &  -    &   85.1 \\
{\synthcat}4      & - & 2330M & 78.3   &   93.59   &   78.80   & 48.68     &    -  &   85.5 \\
\bottomrule
    \end{tabular}
    
\end{table}

\section{meta-learning / few-shots learning}
\label{apx:meta-learning}
\subsection{Datasets}
\textbf{\textsc{Cub}} \citep{wah2011caltech} dataset contains $11,788$ images of 200 birds classes, 100 classes ($5,994$ images) for training and 100 classes ($5,794$ images) for testing. 

\textbf{\textsc{MiniImageNet}} \citep{matchingnet} dataset contains $60,000$ images of 100 classes with 600 images per class, 64 classes for training, 36 classes for testing. 

\subsection{\textsc{Baseline} and \textsc{Baseline++} experiment Settings}
\label{apx:meta_baseline_pretrain}
For \textsc{Baseline} and \textsc{Baseline++} experiments, following \citet{closelookatfewshot},  we use \textsc{RandomSizedCrop}, \textsc{ImageJitter}(0.4, 0.4, 0.4), and \textsc{HorizontalFlip} augmentations, as well as a image normalization $mean=(0.485, 0.456, 0.406)$, $std=(0.229, 0.224, 0.225)$. Then use an \textsc{Adam} optimizer with learning rate = 0.001, batch size = 16, input image size = $224\times224$. Finally, train \textsc{resnet18} on \textsc{Cub} and \textsc{MiniImageNet} datasets for 200 and 400 epochs, respectively. We further tune L2 weight decay from \{0, 1e-5, 1e-4, 1e-3, 1e-2\} and choose 1e-4 for \textsc{Cub}, 1e-5 for \textsc{MiniImageNet} experiments. Compared with the \textsc{Baseline} and \textsc{Baseline++} performance reported by \citet{closelookatfewshot} (table A5), this L2 weight decay tuning process provides $\sim5\%$ and $\sim1\%$ improvement on \textsc{miniImagenet} 5way-1shot and  5way-5shot, respectively. In this work, we use this stronger setting in baseline methods. 

As to the few-shots learning evaluation, following \citet{closelookatfewshot}, we scale images by a factor of 1.15, \textsc{CenterCrop}, and normalization. Then randomly sample 1 or 5 images from 5 random classes from the test set (5way-1shot and 5way-5shot). Finally, train a linear classifier on top of the learned representation with a SGD optimizer, momentum = 0.9, dampening = 0.9, learning rate = 0.1, L2 weight decay = 1e-3, batch size = 4, and epochs = 100. We take the average of 600 such evaluation processes as the test score.

 The \textsc{Baseline} and \textsc{Baseline++} results in Figure \ref{fig:few_shots} report the mean of five runs with different training and evaluating seeds.

\paragraph{Implementation details of the cosine classifier}
Here we summarize the technical details of the cosine classifier implementation used in this work which follows \citet{closelookatfewshot}\footnote{%
\url{https://github.com/wyharveychen/CloserLookFewShot/blob/master/backbone.py##L22}}. 

Denote the representation vector as $z$. The cosine classifier calculates the $i^{th}$ element of logits by: 
\begin{equation}
     h_i = g_i \frac{\left<u_i,z\right>}{||u_i||||z||}
\end{equation}
where $u_i$ is a vector with the same dimension of $z$, $g_i$ is a scalar, $h_i$ is $i^{th}$ element of logits $h$.

Then minimize the cross entropy loss between the target label $y$ and softmax output $s(h)$ by updating $w$ and $g$: $\min_{w,g} \mathcal{L}_{ce}(y, s(h))$.

\subsection{{\synthcat} and {\synthdistill} experiment settings}

For {\synthcat}, we concatenate $n$ representation separately trained by either \textsc{Baseline} or \textsc{Baseline++} as the settings above.  For {\synthdistill}, we use the same multi-head architecture as figure \ref{fig:distill} together with a cross-entropy loss function:
\begin{equation}
\label{eq:ce_kl_distill}
    \min_{\Phi, w_0, \dots, w_n}\sum_{i=0}^n\sum_x\bigg[ (1-\alpha) \mathcal{L}_{ce}\Big(s\big(w_i \circ \Phi(x)\big), y \Big) + \alpha\tau^2\mathcal{L}_{kl}\Big(s_{\tau}\big({v_i} \circ {\phi_i}(x)\big) ~||~ w_i \circ \Phi(x) \Big)\bigg]
\end{equation}, 
where $\mathcal{L}_{ce}$ indicates a cross-entropy loss, $\alpha$ is a trade-off parameter between cross-entropy loss and kl-divergence loss. We set L2 weight decay = 0,  $\tau=10$, search $\alpha \in \{0.8, 0.9, 1\}$, and keep the other hyper-parameters as Appendix \ref{apx:meta_baseline_pretrain}. We find the impact of $\alpha$ is limited in both \textsc{cub} ($\leq 1\%$) and \textsc{MiniImageNet} ($ \leq 0.5\%$) tasks. 

\subsection{Snapshots experiment settings}
In this section, we apply {\synthcat} and {\synthdistill} on 5 snapshots sampled from one training episode (called \textsc{cat5-s} and \textsc{distill5-s}, respectively). We train $\textsc{cub}$ and $\textsc{MiniImageNet}$ respectively for 1000 and 1200 epochs by naive SGD optimizer with a relevant large learning rate 0.8. Then we sample 5 snapshots, $\{200^{th}, 400^{th}, 600^{th}, 800^{th}, 1000^{th}\}$ and  $\{400^{th}, 600^{th}, 800^{th}, 1000^{th}, 1200^{th}\}$, for $\textsc{cub}$ and $\textsc{MiniImageNet}$, respectively. The other hyper-parameters are the same as Appendix \ref{apx:meta_baseline_pretrain}.

\subsection{More experimental results}
Table \ref{tab:few_shot_synt_cat} provides the exact number in Figure \ref{fig:few_shots}, as well as additional {\synthcat}$n$ and {\synthdistill}$n$ few-shots learning results with a linear classifier (The orange and gray bars in figure \ref{fig:few_shots} report the few-shots learning performance with a cosine classifier). 

{Table \ref{tab:few_shot_synt_cat_snapshot} provides more \synthcat5\textsc{-s} and \synthdistill5\textsc{-s} results with either a linear classifier or a cosine-based classifier.}
\begin{table}[t]
    \caption{Few-shots learning performance on \textsc{cub} and \textsc{miniImagenet}. The \textsc{cat5-s} and \textsc{distill5-s} results were obtained using five snapshots taken during a single training episode with a relatively high step size (0.8, SGD). The best snapshot performances are also reported. Standard deviations over five repeats are reported.
    }
    \label{tab:few_shot_synt_cat_snapshot}
    \par\bigskip\centering
    \begin{tabular}{ccc |cc |cc}
  \toprule
 &           &            & \multicolumn{2}{c|}{\textsc{cub}}         &     \multicolumn{2}{c}{\textsc{miniImagenet} } \\
                & architecture & classifier       &   5way 1shot    &    5way 5shot &   5way 1shot    &    5way 5shot    \\
    \midrule
    best snapshot & \textsc{resnet18} & linear & 59.70$\pm$1.38 &	81.35$\pm$0.79 & 52.79$\pm$0.92 & 75.18$\pm$0.57 \\ 
    \synthcat5\textsc{-s} & \textsc{5$\times$resnet18} & linear& 72.62$\pm$0.98 & 86.56$\pm$0.82 & 61.91$\pm$0.37 &	81.06$\pm$0.14 \\
\synthdistill5\textsc{-s} & \textsc{resnet18} & linear& 68.4$\pm$0.5 & 87.2$\pm$0.4 &
    59.9$\pm$0.5 & 80.8$\pm$0.4  \\
    \midrule
    best snapshot & \textsc{resnet18} & cosine & 65.59$\pm$0.87 & 81.81$\pm$0.50 & 55.67$\pm$0.48 &	75.48$\pm$0.46 \\
    \synthcat5\textsc{-s} & \textsc{5$\times$resnet18} & cosine & 73.66$\pm$0.82 & 87.25$\pm$0.77 & 62.94$\pm$0.51 &	81.05$\pm$0.16 \\
\synthdistill5\textsc{-s} & \textsc{resnet18} & cosine &
     75.2$\pm$0.8 & 88.6$\pm$0.4 & 62.0$\pm$0.5 & 81.0$\pm$0.3 \\
    \bottomrule
    \end{tabular}
\end{table}
\begin{table}[ht]
    \centering
    \caption{Few-shot learning performance on \textsc{cub} and \textsc{miniImagenet} dataset with either a linear classifier or cosine-distance based classifier. Standard deviations over five repeats are reported. }
    \bigskip

    \label{tab:few_shot_synt_cat}
    \begin{tabular}{ccc |cc |cc}
  \toprule
 &           &            & \multicolumn{2}{c|}{\textsc{cub}}         &     \multicolumn{2}{c}{\textsc{miniImagenet} } \\
                & architecture & classifier       &   5way 1shot    &    5way 5shot &   5way 1shot    &    5way 5shot    \\
\midrule  
supervised 	    &  \textsc{resnet18} & linear &          63.37$\pm$1.66 & 83.47$\pm$1.23	&	55.20$\pm$0.68 & 76.52$\pm$0.42	\\
{\synthcat}2	&  $2\times$\textsc{resnet18} &   linear &66.25$\pm$0.85 & 85.50$\pm$0.34	&	57.30$\pm$0.31 & 78.42$\pm$0.17	\\
{\synthcat}5	&  $5\times$\textsc{resnet18} &   linear &67.00$\pm$0.18 & 86.80$\pm$0.10	&	58.40$\pm$0.25 & 79.59$\pm$0.17	\\
{\synthdistill}2&  \textsc{resnet18} &   linear&69.93$\pm$0.74 & 87.72$\pm$0.31	&	58.99$\pm$0.32 & 79.73$\pm$0.21	\\
{\synthdistill}5&   \textsc{resnet18} &  linear&70.99$\pm$0.31 & 88.52$\pm$0.14	&	59.66$\pm$0.59 & 80.53$\pm$0.27	\\
\midrule
supervised 	    &  \textsc{resnet18} &   cosine&69.19$\pm$0.88 & 84.41$\pm$0.49	&	57.47$\pm$0.45 & 76.47$\pm$0.27	\\
{\synthcat}2	&  $2\times$\textsc{resnet18} &   cosine&72.87$\pm$0.43 & 86.82$\pm$0.17	&	60.69$\pm$0.24 & 79.29$\pm$0.23	\\
{\synthcat}5	&  $5\times$\textsc{resnet18} &   cosine&76.23$\pm$0.55 & 88.87$\pm$0.40	&	63.63$\pm$0.23 & 81.22$\pm$0.17	\\
{\synthdistill}2&  \textsc{resnet18} &   cosine &  74.81$\pm$0.45 & 88.14$\pm$0.40	&	61.95$\pm$0.11 & 80.79$\pm$0.26	\\
{\synthdistill}5&  \textsc{resnet18} &   cosine & 76.20$\pm$0.39 & 89.18$\pm$0.24	&	62.89$\pm$0.38 & 81.49$\pm$0.26	\\
\bottomrule
    \end{tabular}
\end{table}

\subsection{Comparison with conditional Meta-learning approaches}
In order to address heterogeneous distributions over tasks, the conditional meta-Learning approaches \citet{wang2020structured,denevi2022conditional,rusu2018meta} adapt a part of model parameters conditioning on the target task, while freeze the other model parameters that are pre-trained as a feature extractor. 

The results presented in \citet{wang2020structured,denevi2022conditional,rusu2018meta} already allow us to make some elementary comparisons: \citet{denevi2022conditional} is derived from \citet{wang2020structured}. In practice, \citet{wang2020structured} reuses the pre-trained frozen feature extractor (\textsc{WRN-28-10}) from \citet{rusu2018meta}. Table \ref{tab:miniimagenet_conditional_metalearning} below shows the performance of these conditional meta-learning methods and our \textsc{Distill5} on the \textsc{miniImagenet} few-shot learning task. The first 3 rows are copied from \citet{wang2020structured} (marked by *). Despite the fact that the backbone in \citet{wang2020structured,rusu2018meta} (\textsc{WRN-28-10}) is wider and deeper than the backbone (\textsc{resnet18}) used in our paper, \textsc{Distill5} still outperforms both \citet{wang2020structured} and \citet{rusu2018meta}. Other relevant details are summarized in table \ref{tab:backbone_pretraining_details}.

If our goal were to present state-of-the-art results exploiting diverse features, a more systematic comparison would be needed. however it is not clear that these results say a lot about how optimization constructs and (prematurely) prunes features. The conditional meta-learning addresses an orthogonal problem but does not seem to fix the premature feature pruning issue. Please not that the message of our paper is that a single optimization run — which is what most people are doing these days - prematurely prunes its representations, missing opportunities to produce the richer representations that benefit out-of-distribution scenarios.

\begin{table}[h]
    \centering
    \begin{tabular}{c|cc}
    \toprule
      &  miniImageNet 5way-1shots &	miniImageNet 5way-5shots \\
      \midrule
LEO \cite{rusu2018meta} &	61.76$\pm$0.08*	&77.59$\pm$0.12* \\
LEO(local) \cite{rusu2018meta} &	60.37$\pm$0.74*	&75.36$\pm$0.44* \\
TASML \cite{wang2020structured}&	62.04$\pm$0.52*	&78.22$\pm$0.47* \\
Distill5 (our)&	62.89$\pm$0.38&	81.49$\pm$0.26 \\
\bottomrule
    \end{tabular}
    \caption{\textsc{miniImageNet} few-shots learning comparison between \textsc{Distill5} and conditional meta-learning approaches. The first three rows are copied from corresponding papers (marked by *).}
    \label{tab:miniimagenet_conditional_metalearning}
\end{table}

\begin{table}[h]
    \centering
    \begin{tabular}{c|cc}
    \toprule
& Our backbone  &	LEO backbone \cite{rusu2018meta,wang2020structured} \\
\midrule
Architecture &	\textsc{resnet18} &	\textsc{WRN-28-10} \\ 
Parameters	& 11.4M &	36.5M \\
L2 weight decay	& \checkmark	& \checkmark \\
Learning rate scheduler & 	$\times$	& \checkmark \\
Data augmentation (color)& 	\checkmark	& \checkmark \\
Data augmentation (scale)& 	\checkmark	& \checkmark \\
Data augmentation (deformation)& 	$\times$	& \checkmark \\
\bottomrule
    \end{tabular}
    \caption{Backbone pretraining details. Note that LEO only keeps the first 21 layers (21.7M parameters) after pretraining \textsc{WRN-28-10} (Wide residual network). But it is still twice the time larger than \textsc{resnet18}.}
    \label{tab:backbone_pretraining_details}
\end{table}

\section{Out-of-distribution learning}
\label{apx:ood}
Following \citet{zhang2022rich}, we use the \textsc{Camelyon17} \citep{koh2021wilds} task to showcase the $\synthcat$ and $\synthdistill$ constructed (rich) representation in out-of-distribution learning scenario. The first row of Table \ref{tab:camelyon17_synt_cat} is copied from \citet{zhang2022rich}. The rest results use a frozen pre-trained representation, either by concatenating $n$ ERM pre-trained representations ({\synthcat}$n$), distilling of $n$ ERM pre-trained representations ({\synthdistill}$n$), or \textsc{Bonsai} constructed representations(\textsc{Bonsai2}). Then train a linear classifier on top of the representation by vREx or ERM algorithms. 

For the vREx algorithm, we search the penalty weights from \{0.5, 1, 5, 10, 50, 100\}. For {\synthdistill}$n$ representations in the \textsc{Camelyon17} task, we follow Algorithm 2 in \citet{zhang2022rich}, but use a slightly different dataset balance trick in the loss function (\citet{zhang2022rich} Algorithm 2 line 13-14). We instead balance two kinds of examples: one shares the same predictions on all ERM pre-trained models, and one doesn't. We keep other settings to be the same as \citet{zhang2022rich}\footnote{\url{https://github.com/TjuJianyu/RFC}}.

\end{document}